\newtheorem{lemma}{Lemma}
\newtheorem{proposition}[lemma]{Proposition}
\DeclareMathOperator*{\argmax}{arg\,max}
\DeclareMathOperator*{\argmin}{arg\,min}
\def\transpose{{^\top}}
\title{Alignment Based Kernel Learning with a Continuous Set of Base Kernels}
\author{
Arash Afkanpour \hspace{10mm} Csaba Szepesv\'{a}ri \hspace{10mm} Michael Bowling	\\ \\
Department of Computing Science\\
University of Alberta\\
Edmonton, AB T6G 1K7\\
\{\texttt{afkanpou,szepesva,mbowling}\}\texttt{@ualberta.ca} \\
}
\begin{document}

\maketitle

\begin{abstract}
The success of kernel-based learning methods depend on the choice of kernel. Recently, kernel learning methods have been proposed that use data to select the most appropriate kernel, usually by combining a set of base kernels. We introduce a new algorithm for kernel learning that combines a {\em continuous set of base kernels}, without the common step of discretizing the space of base kernels. We demonstrate that our new method achieves state-of-the-art performance across a variety of real-world datasets. Furthermore, we explicitly demonstrate the importance of combining the right dictionary of kernels, which is problematic for methods based on a finite set of base kernels chosen a priori. Our method is not the first approach to work with continuously parameterized kernels. However, we show that our method requires substantially less computation than previous such approaches, and so is more amenable to multiple dimensional parameterizations of base kernels, which we demonstrate.  
\if0
We present a novel method for learning a positive combination of kernels from a continuously parametrized set of base kernels. Our method implements the steepest ascent approach to forward-stagewise additive modeling to maximize a centered alignment measure. We compare this technique to several techniques from the literature on both synthetic and real-world datasets, illustrating its strengths and weaknesses. 
We observe that in many cases our new method outperforms standard multiple kernel learning methods that discretize the kernel parameter space in terms of the prediction accuracy and computational efficiency. 
\fi
\end{abstract}

\section{Introduction}

A well known fact in machine learning is that the choice of features heavily influences the performance of learning methods.
Similarly, the performance of a learning method that uses a kernel function is highly dependent on the choice of kernel function.
The idea of {\em kernel learning} 
is to use data to select the most appropriate kernel function for the learning task.

In this paper we consider kernel learning in the context of supervised learning.
In particular, we consider the problem of learning positive-coefficient linear combinations of base kernels,
 where the base kernels belong to a parameterized family of kernels, $(\kappa_\sigma)_{\sigma\in \Sigma}$.
Here $\Sigma$ is a ``continuous'' parameter space, i.e., some subset of a Euclidean space.
A prime example (and extremely popular choice) is when $\kappa_\sigma$ is a Gaussian kernel, where
 $\sigma$ can be a single common bandwidth or a vector of bandwidths, one per coordinate.
One approach then is to discretize the parameter space $\Sigma$ and then find an appropriate non-negative linear combination of the resulting set of base kernels, ${\cal N} = \{\kappa_{\sigma_1},\ldots,\kappa_{\sigma_p}\}$. The advantage of this approach is that once the set ${\cal N}$ is fixed, any of the many efficient methods available in the literature can be used to find the coefficients for combining the base kernels in ${\cal N}$ (see the papers by \citealt{lanckriet2004learning, sonnenburg2006large,rakotomamonjy2008simplemkl,cortes2009l2,kloft2011lp} and the references therein).
One potential drawback of this approach is that it requires an appropriate, {\em a priori} choice of  ${\cal N}$.
This might be problematic, e.g., if $\Sigma$ is contained in a Euclidean space of moderate, or large dimension (say, a dimension over 20) since the number of base kernels, $p$, grows exponentially with dimensionality even for moderate discretization accuracies. Furthermore, independent of the dimensionality of the parameter space, the need to choose the set ${\cal N}$ independently of the data is at best inconvenient and selecting an appropriate resolution might be far from trivial.
In this paper we explore an alternative method which avoids the need for discretizing the space $\Sigma$.

We are not the first to realize that discretizing a continuous parameter space might be troublesome:
The method of \citet{argyriou2005learning,argyriou2006dc} can also work with continuously parameterized spaces of kernels. \todo{Note that  Eq. (6) of \citet{argyriou2006dc} is convex in $K$. Hence, we could use, in the future, our gradient techniques directly on this performance measure.}
The main issue with this method, however, is that it may get stuck in local optima since it is based on alternating minimization and the objective function is not jointly convex.
Nevertheless, empirically, in the initial publications of \citet{argyriou2005learning,argyriou2006dc} this method was found to have excellent and robust performance, showing that despite the potential difficulties, the idea of avoiding discretizations might have some traction.

Our new method is similar to that of \citet{argyriou2005learning,argyriou2006dc}, in that it is still based on local search. However, our local search is used within a boosting, or more precisely, forward-stagewise additive modeling (FSAM) procedure, a method that is known to be quite robust to how its ``greedy step'' is implemented \citep[Section 10.3]{hastie2001elements}. Thus, we expect to suffer minimally from issues related to local minima.
A second difference to  \citet{argyriou2005learning,argyriou2006dc} is that our method belongs to the group of {\em two-stage kernel learning} methods. The decision to use a two-stage kernel learning approach was motivated by the recent  success of the two-stage method of  \citet{cortes2010two}. In fact, our kernel learning method uses  the centered  kernel alignment metric of \citet{cortes2010two} (derived from the uncentered alignment metric of \citet{shawe2002kernel}) in its first stage as the objective function of the FSAM procedure, while in the second stage a standard supervised learning technique is used.

The technical difficulty of implementing FSAM is that one needs to compute the functional gradient of the chosen objective function. We show that in our case this problem is equivalent to solving an optimization problem over $\sigma\in\Sigma$ with
an objective function that  is a linear function of the Gram matrix derived from the kernel $\kappa_\sigma$.
Because of the nonlinear dependence of this matrix on $\sigma$, this is the step where we need to resort to local optimization: this optimization problem is in general non-convex. However, as we shall demonstrate empirically, even if we use local solvers to solve this optimization step, the  algorithm still shows an overall excellent performance as compared to other state-of-the-art methods. This is not completely unexpected: One of the key ideas underlying boosting is that it is designed to be robust even when the individual ``greedy'' steps are imperfect (cf., Chapter 12, \citealt{BuhvdG11}). \todo{If we used linear kernels, Euclidean distance learning, this step would also be convex. Could this be interesting!?} Given the new kernel to be added to the existing dictionary, we give a computationally efficient, closed-form expression that can be used to determine the coefficient on the new kernel to be added to the previous kernels.

The empirical performance of our proposed method is explored in a series of experiments.
Our experiments serve multiple purposes.
Firstly, we explore the potential advantages, as well as limitations of the proposed technique. 
In particular, we demonstrate that the procedure is indeed reliable (despite the potential difficulty of implementing the greedy step) and that it can be successfully used even when $\Sigma$ is a subset of a multi-dimensional space. 
Secondly, we demonstrate that in some cases, kernel learning can have a very large improvement over simpler alternatives, such as combining some fixed dictionary of kernels with uniform weights. 
Whether this is true is an important issue that is given weight by the fact that just recently it became a subject of dispute \citep{Cortes09:invited}.
Finally, we compare the performance of our method, both from the perspective of its generalization capability and computational cost, to its natural, state-of-the-art alternatives, such as the two-stage method of \citet{cortes2010two} and the algorithm of
\citet{argyriou2005learning,argyriou2006dc}. For this, we compared our method on datasets used in previous kernel-learning work. To give further weight to our results, we compare on more datasets than any of the previous papers that proposed new kernel learning methods.

Our experiments demonstrate  that our new method is {\em competitive in terms of its generalization performance, while its computational cost is significantly less than that of its competitors that enjoy similarly good generalization performance as our method}.
In addition, our experiments also revealed an interesting novel insight into the behavior of two-stage methods: we noticed that two-stage methods can ``overfit'' the performance metric of the first stage.  In some problem we observed that our method could find kernels that gave rise to better (test-set) performance on the first-stage metric, while the method's overall performance degrades when compared to using kernel combinations whose performance on the first metric is worse. The explanation of this is  that metric of the first stage  is a surrogate performance measure and thus just like in the case of choosing a surrogate loss in classification, better performance according to this surrogate metric does not necessarily transfer into better performance in the primary metric as there is no monotonicity relation between these two metrics. We also show that with proper capacity control, the problem of overfitting the surrogate metric can be overcome. Finally, our experiments  show a clear advantage to using kernel learning methods as opposed to combining kernels with a uniform weight, although it seems that the advantage mainly comes from the ability of our method to discover the right set of kernels. This conclusion is strengthened by the fact that the closest competitor to our method was found to be the method of \citet{argyriou2006dc} that also searches the continuous parameter space, avoiding discretizations. Our conclusion is that it seems that the choice of the base dictionary is more important than how the dictionary elements are combined \todo{Did we try say uniform weights on the top of the dictionary we found? If yes, is this noted somewhere in the paper? In the conclusion these things should be repeated.} and that the {\em a priori} choice of this dictionary may not be trivial. This is certainly true already when the number of parameters is moderate. Moreover, when the number of parameters is larger, simple discretization methods are infeasible, whereas our method can still produce meaningful dictionaries.

\if0
On the real-world datasets we find that our method is pleasingly competitive with the state-of-the-art kernel learning methods, while requiring considerably less computation.
The purpose of experiments with the artificial datasets is to explore the method in controlled situations.
One of our contributions here is to construct an example where a nontrivial finite-time performance gain is expected by combining kernels, even when the family $(\kappa_\sigma)_{\sigma\in \Sigma}$ is such that any single kernel from the family would be sufficient to achieve asymptotic consistency. We further demonstrate that our method can find the right kernel combination in this case and thus see these performance gains. We also observe that although centered alignment (on a test set) is relatively well correlated with the test-set performance of the predictors built on the learned kernel functions, in some cases a higher (test-set) alignment can in fact give rise to degraded test-performance. As (centered) alignment is a surrogate measure, this should be hardly surprising, though as far as we know, this problem has not been documented in the literature yet.
\fi

\if0
Our approach is most related to the method of \citet{argyriou2005learning,argyriou2006dc}, who also derive a method to search in the same space as our method. Their method, similarly to ours, is also based on greedy optimization. We will empirically show that our method is comparable to the method of \citet{argyriou2005learning} in terms of the prediction accuracy. However, our method requires much less computational time, which makes it favorable against the method of \citet{argyriou2005learning}. In addition to this method, we also compare our method to several multiple kernel learning methods which require the discretization of $\Sigma$ \citep{cortes2010two,kloft2011lp}. Since no published standards are available for how to set up an appropriate discretization of $\Sigma$, we were forced to ``invent'' some methods for doing this. In doing so, we tried to be as diligent as possible. The multiple kernel learning methods (especially, the method of \citet{cortes2010two}) showed some good virtue in some cases, though their performance overall lags behind that of the methods which avoid discretization. 
\fi

\if0
However, it is very well possible, that better discretization of $\Sigma$ would improve their performance, thus, this comparison should be taken with a grain of salt, or as an illustration of the difficulties surrounding the discretization approach.
\fi

\if0
There are learning problems in which one should consider high-dimensional kernel parameterization. Standard multiple kernel learning methods cannot be used for such problems as one needs to deal with exponential number of base kernels that are obtained by the discretization of the kernel parameter space. We will show that our method offers a viable solution for such problems. In these examples we will observe a considerable performance gain compared to the standard multiple kernel learning methods.

In summary, we will show that in many cases our new algorithm outperforms multiple kernel learning methods that discretize kernel parameter space in terms of the prediction accuracy and computational efficiency. While our method's accuracy is comparable to that of \citet{argyriou2005learning}, it requires much less computational time. 
\fi

\if0
A general conclusion that arises from this study is that methods that avoid discretization face rather different tradeoffs than methods which are based on discretization and thus these methods are worthwhile for further study.
\fi

\if0
In addition to the approaches listed above, methods exist for learning a fixed-degree polynomial combination of a finite set of kernels \citep{cortes2009learning},
 or learning a kernel in an appropriate reproducing kernel Hilbert space of kernels \citep{ong2005learning} and it would be interesting to compare these to our methods, as well. However, this is left for future work. Another idea left for future work is to run the finite-kernel learning methods on the top of the base-kernels found by our method.
\fi

\if0
In this paper we propose a new method for kernel learning.
This method finds a linear combination of kernels from a \emph{continuously parameterized} family of kernels with non-negative weights.
The algorithm works in the spirit of forward stagewise additive modeling \citep{hastie2001elements}(Section 10.3). 
This method is based on the principle of kernel alignment~\citep{shawe2002kernel,cortes2010two}: the resulting kernel should be ``well-aligned'' with the desired outputs.
We evaluate this method empirically along with other kernel learning techniques from the literature.
We illustrate, using an artificially constructed example,
 that when lacking sufficient {\em a priori} knowledge,
combinations of kernels from a continuous family might be more powerful than
 combinations from a finite set chosen {\em a priori}.
We also show the performance of the new method on two real-world datasets.
Our results suggest that our new method is competitive with state-of-the-art kernel learning methods.

Another advantage of our method compared to the standard multiple kernel learning methods is its ability to learn multi-dimensional kernel parameters. Standard kernel learning methods require a finite set of base kernels. One common way to build this set is by discretizing the kernel parameter space, e.g. the bandwidth of the Gaussian kernels. The number of base kernels grows exponentially with the dimension of the kernel parameter. However, we show that our method is able to search in multi-dimensional kernel parameter spaces. We show that there are cases in which learning a multi-dimensional kernel parameter will lead to more accurate classifiers.\todo{This paragraph should be rewritten}


\section{Background}

Kernel learning methods differ in terms of the family of kernels they work with and also in terms of whether the learning of a predictor happens simultaneously with learning the kernel (single-stage), or it happens after the kernel is constructed (two-stage).

As far as the family of kernels is concerned,
 the simplest idea is to start with a finite set of base kernels
 and restrict the search to a linear combination of these kernels.
If the search is further restricted to combinations with non-negative coefficients,
the resulting kernel will stay positive-semidefinite.
This property makes possible the use of (computationally) efficient
  and theoretically well-justified learning algorithms.
The coefficients can be further restricted, for example, by constraining their norm, such as their $\ell_1$-norm, \citep[e.g.,][]{lanckriet2004learning, sonnenburg2006large}, or their $\ell_2$-norm, \citep[e.g.,][]{cortes2009l2,kloft2011lp}.
These lead to qualitatively different problems,
 both in terms of their computational complexity and statistical properties as discussed, e.g.,
 by \citet{cortes2009l2}.
By restricting the set of base kernels to a finite set, one rules out a large class of kernels.
For example, tuning the bandwidth of a Gaussian kernel will be possible only up to some limited accuracy.
The problem of kernel learning within continuously parameterized families of kernels was studied, e.g., by
 \citet{CrisCamSW99,argyriou2005learning}.
One difficulty is that this formulation does not necessarily lead to convex or ``easy'' optimization problems.
Other options which have been explored previously include
 learning a fixed-degree polynomial combination of a finite set of kernels \citep{cortes2009learning},
 or learning a kernel in an appropriate reproducing kernel Hilbert space of kernels \citep{ong2005learning}.

Another dimension of kernel learning is based on whether learning happens in a single stage or whether it happens in two stages.
In a single-stage procedure, the final predictor and the kernel are learned simultaneously, usually by optimizing a single objective function, which is derived from an estimate of the risk of the predictor.
Examples include the methods by \citet{lanckriet2004learning,ong2005learning,cortes2009l2}.
In a two-stage procedure, first a kernel is learned, then the learned kernel is used in a second stage to produce a predictor. Examples of methods that use this approach include the methods by \citet{lanckriet2004learning} (Section 4.7) and \citet{cortes2010two}.

All these choices have their own benefits and drawbacks.
Working with a finite set of base kernels (usually) provides computational advantages.
However, as we shall illustrate, by way of example, in certain cases choosing the base set might be nontrivial  in the sense that missing out the ``right'' kernels from the base set can lead to dramatically degraded performance.
However, in some cases, there is no continuously parameterized family of kernels (e.g., the base kernels themselves could be derived from data), so it is natural to work with methods which consider finite families of kernels.  Next, single-stage methods might be preferred because they optimize a criterion which is directly related to the performance of the final predictor. However, the picture is more complicated since many of these methods are only optimizing for surrogate losses (e.g., in classification).

We introduce a new two-stage method for kernel learning based on maximizing a notion of kernel alignment. Our approach will handle continuously parameterized sets of base kernels.


\if0
Different families of kernels have been studied so far. One of the widely-used families of kernels is the convex combination of a finite set of kernels. \citet{lanckriet2004learning} studied the problem of learning the dual coefficients of SVM along with optimizing a linear combination of a set of kernel matrices, $K = \sum_{i=1}^r \mu_i K_i$, with $L_1$ regularization on the weight vector $\mu$ and requiring that the resulting kernel, $K$, is positive semidefinite. Due to the $L_1$ penalty, the resulting combination of kernels is sparse. They show that this form of multiple kernel learning is a semidefinite program (SDP). Semidefinite programs, despite being convex, are computationally expensive to solve. \citet{lanckriet2004learning} simplified the problem by replacing the condition $K \succeq 0$ with another condition that required the kernel weights be non-negative, i.e. $\mu \geq 0$. They show that the new problem formulation is a second-order cone programming problem (SOCP), which is an improvement in terms of the computational complexity over the original SDP problem.

\citet{cortes2009l2} proposed an efficient iterative algorithm for learning a linear combination of base kernels with $L_2$ penalty on the kernel weight vector. Given the dual coefficients of the kernel-based method, they propose a closed-form solution to compute the kernel weights. Therefore, in each iteration, the algorithm computes the dual coefficients using standard solvers for kernel methods and the kernel weights until convergence criteria are met. \citet{cortes2009l2} claim that as the number of kernels increases the $L_2$ regularization of the kernel weight vector leads to more accurate predictors than the $L_1$ regularization. \citet{kloft2009efficient} generalized multiple kernel learning to $\ell_p$-norms when $p > 1$. They present several efficient methods for computing the kernel weight vector.

\citet{argyriou2005learning} proposed learning multiple kernels from a continuously parametrized set of kernels, i.e. kernels of the form $k(\sigma)(x,x')$ in which $\sigma$ is chosen from a continuous parameter space. In other words their method is useful when there are infinite number of parametrized base kernels. They propose an iterative regularized risk minimization algorithm that optimizes $\sigma$ in each iteration and adds a new kernel to the previous combination. The optimization of $\sigma$ is not convex in general, however \citet{argyriou2006dc} show that for some family of base kernels, such as Gaussian kernels, it belongs to the class of DC programs.

The above methods learn linear combinations of kernels. There are, however, techniques that address other approaches to kernel learning. For example, \citet{cortes2009learning} considered learning a polynomial combination of a finite set of kernels up to a certain degree. \citet{ong2005learning} proposed \emph{hyperkernels} as an approach for learning the kernel function. Hyperkernels are positive semidefinite kernels of the form $k((x,z),(x',z'))$. For each hyperkernel, there exists an underlying RKHS. They formulate the problem as a semidefinite program that learns a kernel function from the RKHS associated with the specified hyperkernel.

In this paper, we focus on learning a combination of kernels. As we mentioned, a large group of kernel learning methods focus on learning a convex combination of a \emph{finite} set of kernels. Base kernels may be generated from a parametrized family of kernels, such as Gaussian kernels, with a specific parameter value for each kernel. Let us consider Gaussian kernels parametrized with a one-dimensional parameter $\sigma$ that controls width of the bell. The common practice to build base kernels is to choose $r$ values from an interval and compute the kernel matrices for a given dataset with the chosen values. The kernel learning algorithm then learns the best convex combination of the provided base kernels. The accuracy of the resulting predictor depends on the number of base kernels and the chosen values of the kernel parameters. Without prior knowledge, it is usually difficult to determine appropriate parameter values for base kernels. In this work we address this problem by proposing a method that enables us to learn a combination of base kernels while the parameter of each base kernel is chosen from a continuous set. Our method uses the notion of alignment, i.e. it learns a convex combination of kernels such that the resulting kernel is similar to the target label kernel, i.e. $YY\transpose$, where $Y \in \{-1,1\}^n$.

There are cases when we have already been given a finite set of base kernels. In these situation our new method can not be applied. For these scenarios we present another alignment-based method that learns a convex combination of base kernels. Our problem formulation differs from previous methods for multiple kernel learning that use the notion of alignment. We show in Section \ref{sec:experimental_results} that our new methods outperform the state-of-the-art multiple kernel learning methods.
\fi
\fi

\if0
\section{Kernel Alignment}
\label{sec:kernel_alignment}

invariant to shifts of the response:
centering
not penalizing the intercept

The notion of kernel alignment was introduced by \citet{shawe2002kernel}. Alignment is a numerical value that measures the similarity of two kernel functions for a given dataset. Let $K, K' \in \mathbb{R}^{n \times n}$ be positive semidefinite kernel matrices obtained by applying kernel functions $k$ and $k'$ respectively over the data $\mathcal{D} = (X_1,\ldots,X_n)$. \citet{shawe2002kernel} define the empirical alignment between the kernels $k$ and $k'$ through\footnote{Remember that $\langle A,B\rangle_F = A^\top B$ and for a square matrix $A$, $\|A\|_F^{1/2} = \langle A,A \rangle$.}
\begin{eqnarray}
	A(K,K')
	 &=& \frac{\langle K, K' \rangle_{F}}{\|K\|_F \|K'\|_F}\,.
\label{eq:alignment}
\end{eqnarray}
Alignment can be used as the objective function of an optimization problem in order to learn the kernel. \citet{lanckriet2004learning} propose a quadratically constrained quadratic program (QCQP) to maximize alignment between a non-negative linear combination of a set of base kernels and the target label kernel, $K_Y = YY\transpose$, where $Y \in \{-1,1\}^n$ are the labels of the training data. The resulting kernel can then be used with any kernel-based method to learn a predictor. This suggests a two-step kernel learning algorithm. In the first step the algorithm learns a kernel from a given family that maximizes the alignment with the target label kernel, over the given training data, and in the second step the algorithm employs a kernel-based method such as SVM and ridge regression to learn the final predictor using the kernel learned from the first step. Recently, \citet{cortes2010two} uses this method to learn a non-negative linear combination of base kernels. Their method uses $L_2$ regularization on the kernel weights. However, \citet{cortes2010two} propose a slightly different definition of alignment. Their definition is based on centering in the feature space.

Let $\Phi: \mathcal{X} \rightarrow H$ be the feature map associated with a positive semidefinite kernel function $k: \mathcal{X} \times \mathcal{X} \rightarrow \mathbb{R}$. Let $P$ be the distribution that generates data points $x \in \mathcal{X}$. In order to center the kernel function $k$, we should center the associated feature map, $\Phi$, i.e. $\Phi - \mathbb{E}_x[\Phi]$, in which $\mathbb{E}_x[\Phi]$ is the expected value of $\Phi$ when $x$ is drawn according to the distribution $P$. For each pair of instances, $x,x' \in \mathcal{X}$, the value of the centered kernel, $k_c$, is defined by
\begin{eqnarray}
	k_c(x,x') &=& \langle \Phi(x) - \mathbb{E}_x[\Phi] , \Phi(x') - \mathbb{E}_{x'}[\Phi] \rangle \nonumber \\
	&=& k(x,x') - \mathbb{E}_x[k(x,x')] - \mathbb{E}_{x'}[k(x,x')] \nonumber \\
	&+& \mathbb{E}_{x,x'}[k(x,x')].
\label{eq:kernel_centering}
\end{eqnarray}
Given a positive semidefinite kernel matrix $K \in \mathbb{R}^{n \times n}$, the corresponding centered kernel matrix, $K_c$ can easily be computed by
\begin{eqnarray}
	[K_c]_{pq} = K_{pq} - \frac{1}{n} \sum_{i=1}^n K_{iq}	- \frac{1}{n} \sum_{j=1}^n K_{pj} + \frac{1}{n^2} \sum_{i,j=1}^n K_{ij}. \nonumber
\end{eqnarray}
In matrix form it can computed by
\begin{eqnarray}
K_c &=& \left(\mathbf{I}_n - \frac{1}{n}\mathbf{1}\mathbf{1}\transpose\right)K\left(\mathbf{I}_n - \frac{1}{n}\mathbf{1}\mathbf{1}\transpose\right). \nonumber
\end{eqnarray}

The alignment between two kernel functions, according to \citet{cortes2010two}, is defined using the expected value of the centered kernel functions:
\begin{eqnarray}
	A(k,k') &=& \frac{\mathbb{E}[k_c k'_c]}{\sqrt{\mathbb{E} [k^2_c] \mathbb{E} [k'^2_c]}}.
\end{eqnarray}
The empirical alignment of two kernels over the dataset $\mathcal{D} = \{x_1,\ldots,x_n\}$ is defined by
\begin{eqnarray}
	\widehat{A}(k,k';\mathcal{D}) &=& \frac{\langle K_c,K'_c \rangle_F}{\|K_c\|_F \|K'_c\|_F},
\end{eqnarray}
where $K_c$ and $K'_c$ are the centered versions of the kernel matrices $K$ and $K'$ respectively.

\citet{cortes2010two} emphasize on the importance of centering kernels with an example. In the example they choose a linearly separable dataset and a linear kernel that can be used to perfectly separate the data. They expect to see the maximum value of alignment between the linear kernel and the target label kernel, $K_Y$, regardless of the proportion of data points in each class as long as data remain linearly separable. However, they observe that if the kernel matrices are not centered, the value of empirical alignment varies as the proportion of data in each class changes. Therefore, they conclude that without centering, the definition of alignment does not correlate well with performance.
\fi

\newcommand{\ra}{\rightarrow}
\newcommand{\real}{\mathbb{R}}

\section{The New Method}
\label{sec:new_method}


\if0
In Section~\ref{sec:continuous_method} we propose a novel idea for learning a combination of kernels chosen from a continuously parametrized set of kernels.
Thereafter, in Section~\ref{sec:nilanjan_method}, we describe another multiple kernel learning algorithm that is based on maximizing a different form of alignment.
\fi

The purpose of this section is to describe our new method.
Let us start with the introduction of the problem setting and the notation.
We consider binary classification problems, where the data ${\cal D} = ((X_1,Y_1),\ldots,(X_n,Y_n))$ is a sequence of independent, identically distributed random variables, with $(X_i,Y_i)\in \real^d\times \{-1,+1\}$.
For convenience, we introduce two other pairs of random variables $(X,Y)$, $(X',Y')$, which are also independent of each other and they share the same distribution with $(X_i,Y_i)$.
The goal of classifier learning is to find a predictor, $g:\real^d \ra \{-1,+1\}$ such that the predictor's risk, $L(g) = \mathbb{P}( g(X)\not=Y )$, is close to the Bayes-risk, $\inf_g L(g)$. We will consider a two-stage method, as noted in the introduction. The first stage of our method will pick some kernel $k:\real^d\times\real^d \ra \real$ from some set of kernels ${\cal K}$ based on ${\cal D}$, which is then used in the second stage, using the same data ${\cal D}$ to find a good predictor.\footnote{One could consider splitting the data, but we see no advantage to doing so. Also, the methods for the second stage are not a focus of this work and the particular methods used in the experiments are described later.}


Consider a parametric family of base kernels, $(\kappa_\sigma)_{\sigma\in \Sigma}$.
The  kernels considered by our method belong to the set
\[
\mathcal{K} = \left\{ \sum_{i=1}^r \mu_i \kappa_{\sigma_i} : r \in \mathbb{N}, \mu_i \geq 0, \sigma_i \in \Sigma, i=1,\ldots,r \right\},
\]
i.e., we allow non-negative linear combinations of a finite number of base kernels.
For example, the base kernel could be a Gaussian kernel, where $\sigma>0$ is its bandwidth:
$\kappa_\sigma(x,x') = \exp(-\|x-x'\|^2/\sigma^2)$, where $x,x'\in \real^d$.
However, one could also have a separate bandwidth for each coordinate.

\newcommand{\EE}[1]{\mathbb{E}\left[#1\right]}
\newcommand{\eqdef}{\stackrel{{\rm def}}{=}}
\newcommand{\ipL}[1]{\left\langle #1 \right\rangle}
\newcommand{\ip}[1]{\langle #1 \rangle}
\newcommand{\normL}[1]{\left\|#1\right\|}
\newcommand{\norm}[1]{\|#1\|}
The ``ideal'' kernel underlying the common distribution of the data
is $k^*(x,x') = \EE{\, YY' \,| \, X=x,X'=x'\,}$.
Our new method attempts to find a kernel $k \in {\mathcal{K}}$ which is maximally aligned to
this ideal kernel, where, following~\citet{cortes2010two}, the alignment between two kernels $k,\tilde{k}$ is measured by the {\em centered alignment metric},\footnote{Note that the word metric is used in its everyday sense and not in its mathematical sense.}
\[
A_c(k,\tilde{k}) \eqdef \frac{ \ip{k_c,\tilde{k}_c} }{ \norm{k_c}\norm{\tilde{k}_c} }\,,
\]
where
$k_c$ is the kernel underlying $k$ centered in the feature space (similarly for $\tilde{k}_c$),
$\ip{k,\tilde{k}} = \EE{ k(X,X') \tilde{k}(X,X') }$ and $\norm{k}^2 = \ip{k,k}$.
A kernel $k$ centered in the feature space, by definition, is the unique kernel  $k_c$,
 such that for any $x,x'$,
$k_c(x,x') = \ip{ \Phi(x) - \EE{\Phi(X)}, \Phi(x') - \EE{ \Phi(X) } }$, where $\Phi$ is a feature map underlying $k$.
By considering centered kernels $k_c$, $\tilde{k}_c$ in the alignment metric,
 one implicitly matches the mean responses $\mathbb{E}[k(X,X')]$, $\mathbb{E}[\tilde{k}(X,X')]$ before considering the alignment between the kernels (thus, centering depends on the distribution of $X$).
An alternative way of stating this is that centering cancels mismatches of the mean responses between the two kernels.
When one of the kernels is the ideal kernel, centered alignment effectively standardizes the alignment by cancelling the effect of imbalanced class distributions.
For further discussion of the virtues of centered alignment, see the paper by \citet{cortes2010two}.

Since the common distribution underlying the data is unknown, one resorts to empirical approximations to alignment and centering, resulting in the empirical alignment metric,
\[
A_c(K,\tilde{K}) = \frac{ \ip{K_c,\tilde{K}_c}_F }{ \norm{K_c}_F\norm{\tilde{K}_c}_F }\,,
\]
where, $K=(k(X_i,X_j))_{1\le i,j\le n}$, and $\tilde{K} = (\tilde{k}(X_i,X_j))_{1\le i,j\le n}$ are the kernel matrices underlying $k$ and $\tilde{k}$, and for a kernel matrix, $K$, $K_c = C_n K C_n$, where $C_n$ is the so-called centering matrix defined by $C_n = I_{n\times n} - \mathbf{1} \mathbf{1}^\top /n$, $I_{n\times n}$ being the $n\times n$ identity matrix and $\mathbf{1} = (1,\ldots,1)^\top \in \real^n$.
\newcommand{\bY}{\mathbf{Y}}
The empirical counterpart of maximizing $A_c(k,k^*)$ is to maximize $A_c(K,\hat{K}^*)$, where $\hat{K}^* \eqdef \bY \bY^T$, and $\bY  =(Y_1,\ldots,Y_n)^\top$ collects the responses into an $n$-dimensional vector.
Here, $K$ is the kernel matrix derived from a kernel $k \in {\cal K}$. To make this connection clear, we will write $K=K(k)$.
Define $f:{\mathcal K} \ra \real$ by $f(k) = A_c( K(k), \hat{K}^* )$.

\if0
We present a new algorithm for learning a non-negative linear combination of kernels with the objective of maximizing the empirical alignment between the resulting kernel and the target label kernel, $K_Y$. As we shall see, our method will be able to learn a linear combination of kernels chosen from a continuously parametrized set of base kernels. One famous example is Gaussian kernels with one dimensional parameter $\sigma$: $\kappa_\sigma(x,x') = \exp(-\|x-x'\|^2/\sigma^2)$, where $\sigma > 0$. In this paper we mainly focus on base kernels with one-dimensional parameters. We will address kernels with multi-dimensional parameters later. The resulting kernel is a linear combination of base kernels. In other words we consider kernels of the form
\begin{eqnarray}
	\mathcal{K} = \left\{ \sum_{\ell=1}^r \mu_\ell \kappa_{\sigma_\ell} : r \in \mathbb{N}, \mu_\ell \geq 0, \sigma_\ell \in \Sigma, \ell=1,\ldots,r \right\} \nonumber
\end{eqnarray}
where $\Sigma$ is the parameter space.


The optimization problem is defined as maximizing the alignment between a kernel and the target label kernel. Let us assume for now that all base kernels are centered. We will come back to this technicality later. Since $\langle YY_c\transpose, YY_c\transpose \rangle_F$ does not depend on the learned kernel, we can omit that from the optimization problem:
\begin{eqnarray}
	\max_{k\in\mathcal{K}} && f(k) = \frac{\langle K, YY\transpose \rangle_F}{\sqrt{\langle K,K \rangle_F}},
\label{eq:optimization_problem}
\end{eqnarray}
where $K = [k(x_i,x_j)]_{i,j=1,\ldots,n}$. Note that $\langle K, YY_c\transpose \rangle_F = \langle K, YY\transpose \rangle_F$ since we assume that $K$ is centered.
\fi

To find an approximate maximizer of $f$, we propose a steepest ascent approach to {\em forward stagewise additive modeling} (FSAM). FSAM~\citep{hastie2001elements} is an iterative method for optimizing an objective function by sequentially adding new basis functions without changing the parameters and coefficients of the previously added basis functions.
In the steepest ascent approach, in iteration $t$, we search for the base kernel in $(\kappa_\sigma)$ defining the direction in which the growth rate of $f$ is the largest, locally in a small neighborhood of the previous candidate $k^{t-1}$:
\newcommand{\eps}{\varepsilon}
\renewcommand{\epsilon}{\varepsilon}
\begin{equation}
\label{eq:sigmatdef}
\sigma^*_t = \argmax_{\sigma\in \Sigma}
\lim_{\eps\ra 0} \frac{f(k^{t-1}+\eps\, \kappa_\sigma)-f(k^{t-1})}{\eps}\,.
\end{equation}
Once $\sigma^*_t$ is found, the algorithm finds the coefficient $0\le \eta_t \le \eta_{\max}$\footnote{In all our experiments we use the arbitrary value $\eta_{\max}=1$. Note that the value of $\eta_{\max}$, together with the limit $T$ acts as a regularizer. However, in our experiments, the procedure always stops before the limit $T$ on the number of iterations is reached.} such that $f(k^{t-1}+\eta_t \kappa_{\sigma_t^*})$ is maximized and the candidate is updated using $k^t = k^{t-1}+\eta_t \kappa_{\sigma_t^*}$.
The process stops  when  the objective function $f$ ceases to increase by an amount larger than $\theta>0$, or when the number of iterations becomes larger then a predetermined limit $T$, whichever happens earlier.
\begin{algorithm}[tb]
\caption{Forward stagewise additive modeling for
 kernel learning with a continuously parametrized set of kernels.
 For the definitions of $f$, $F$, $F'$ and $K:{\mathcal K} \ra \real^{n\times n}$, see the text.
}
\label{alg:continuous_method}
\begin{algorithmic}[1]
    \STATE \textbf{Inputs:} data ${\cal D}$, kernel initialization parameter $\varepsilon$, the number of iterations $T$,
     tolerance $\theta$, maximum stepsize $\eta_{\max}>0$.
    \STATE $K^0 \leftarrow \varepsilon I_n$.
    \FOR{$t=1$ {\bfseries to} $T$}
			\STATE $P \leftarrow F'(K^{t-1})$ 
			\STATE $P \leftarrow C_n\, P\, C_n$
			\STATE $\sigma^* = \argmax_{\sigma\in \Sigma}\, \ip{ P, K(\kappa_\sigma) }_F$ 
			\STATE $K' =  C_n\, K (\kappa_{\sigma^*})\, C_n$
			\STATE $\eta^* = \argmax_{0\le \eta \le \eta_{\max}} F(K^{t-1} + \eta K' )$
			\STATE $K^t \leftarrow K^{t-1} + \eta^*K'$
 			\STATE \textbf{if} $F(K^t) \leq F(K^{t-1})+\theta$ \textbf{then} terminate
		\ENDFOR
\end{algorithmic}
\end{algorithm}

\begin{proposition}\label{prop:dd}
The value of $\sigma_t^*$ can be obtained by
\begin{eqnarray}
\sigma_t^* = \argmax_{\sigma\in \Sigma} \,
	\ipL{\, K(\kappa_\sigma), F'(\, (K(k^{t-1}))_c ) \, }_F\,,
\label{eq:sigma_star}
\end{eqnarray}
where for a kernel matrix $K$,
\begin{align}
\label{eq:Fder}
F'(K) = \frac{ \hat{K}^*_c - \norm{K}_F^{-2}\ip{ K, \hat{K}^*_c}_F\, K }
				   { \norm{K}_F \norm{\hat{K}^*_c}_F}\,.
\end{align}
\end{proposition}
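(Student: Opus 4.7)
The plan is to view $f$ as the composition of two simple maps and apply the chain rule. Define $F:\real^{n\times n}\to\real$ on kernel matrices by
\[
F(K) = \frac{\ip{K,\hat{K}^*_c}_F}{\norm{K}_F \norm{\hat{K}^*_c}_F}\,,
\]
so that $f(k) = F(K(k)_c)$. Two structural observations drive everything: (i) the map $k\mapsto K(k)$ is linear, hence $K(k^{t-1}+\epsilon\kappa_\sigma) = K(k^{t-1})+\epsilon K(\kappa_\sigma)$; and (ii) the centering map $K\mapsto C_n K C_n$ is also linear, so $K(k^{t-1}+\epsilon\kappa_\sigma)_c = K(k^{t-1})_c + \epsilon\,K(\kappa_\sigma)_c$.

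With these two observations in hand, the directional derivative appearing in \eqref{eq:sigmatdef} is just the Fr\'echet derivative of $F$ at $K(k^{t-1})_c$ applied to $K(\kappa_\sigma)_c$:
\[
\lim_{\epsilon\to 0}\frac{f(k^{t-1}+\epsilon\kappa_\sigma)-f(k^{t-1})}{\epsilon}
 = \ipL{F'(K(k^{t-1})_c),\, K(\kappa_\sigma)_c}_F\,.
\]
I would then compute $F'(K)$ directly by the quotient rule, using that $\nabla_K\ip{K,A}_F=A$ and $\nabla_K\norm{K}_F = K/\norm{K}_F$ for fixed $A$; this yields exactly the expression in \eqref{eq:Fder}.

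The last step is to replace $K(\kappa_\sigma)_c$ by $K(\kappa_\sigma)$ inside the inner product, which is what the proposition asserts. For this I would use the elementary identity $\ip{A, C_n B C_n}_F = \ip{C_n A C_n, B}_F$ (since $C_n$ is symmetric and the Frobenius inner product is a trace), combined with the crucial fact that $F'(K(k^{t-1})_c)$ is already centered: it is an affine combination of $\hat{K}^*_c$ and $K(k^{t-1})_c$, both of which are fixed by $K\mapsto C_n K C_n$ because $C_n^2=C_n$. Therefore $C_n F'(K(k^{t-1})_c) C_n = F'(K(k^{t-1})_c)$, and
\[
\ipL{F'(K(k^{t-1})_c),\,K(\kappa_\sigma)_c}_F = \ipL{F'(K(k^{t-1})_c),\,K(\kappa_\sigma)}_F\,,
\]
so maximizing the directional derivative over $\sigma$ is equivalent to \eqref{eq:sigma_star}.

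None of the steps looks genuinely hard; the only place where one needs to be a little careful is the gradient computation (keeping track of the three factors in the denominator and the numerator) and the centering manipulation at the end. The centering step is conceptually the most interesting: it is what lets the algorithm precompute $P = F'(K^{t-1})$ once and then evaluate $\ip{P,K(\kappa_\sigma)}_F$ cheaply for every candidate $\sigma$, rather than re-centering each candidate base kernel matrix.
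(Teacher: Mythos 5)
Your proposal is correct and follows essentially the same route as the paper's proof: both express $f$ as $F\circ(\text{centering})\circ K(\cdot)$, exploit linearity of $k\mapsto K(k)$ and of centering, compute $F'$ by the quotient rule, move the centering operator across the Frobenius inner product via the trace/adjoint identity, and finish by noting that $F'((K(k^{t-1}))_c)$ is already centered because $C_n$ is symmetric idempotent. The paper merely packages the centering manipulation as a standalone lemma ($F_c'(K)=C_nF'(K_c)C_n$) rather than applying it to the directional derivative directly, which is an organizational rather than a substantive difference.
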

The proof can be found in the supplementary material. The crux of the proposition is that the directional derivative in~\eqref{eq:sigmatdef} can be calculated and gives the expression maximized in~\eqref{eq:sigma_star}.

In general, the optimization problem~\eqref{eq:sigma_star} is not convex and the cost of obtaining a (good approximate) solution is hard to predict.  Evidence that, at least in some cases, the function to be optimized is not ill-behaved is presented in Section~\ref{sec:non_convexity} of the supplementary material. In our experiments, an approximate solution to~\eqref{eq:sigma_star} is found using numerical methods.\footnote{
In particular, we use the \textsf{fmincon} function of Matlab, with the interior-point algorithm option.
}
As a final remark to this issue, note that, as is usual in boosting, finding the global optimizer in~\eqref{eq:sigma_star} might not be necessary for achieving good statistical performance.


The other parameter, $\eta_t$, however, is easy to find, since the underlying optimization problem has a closed form solution:
\begin{proposition}\label{prop:pp2}
The value of $\eta_t$ is given by $\eta_t=\argmax_{\eta\in \{0,\eta^*,\eta_{\max}\}} f(k^{t-1}+\eta \kappa_{\sigma_t^*} )$,
where $\eta^*=\max(0,(ad-bc)/(bd-ae))$ if $bd-ae\not=0$ and $\eta^*=0$ otherwise, $a=\ip{K,\hat{K}^*_c}_F$, $b=\ip{K',\hat{K}^*_c}_F$, $c=\ip{K,K}_F$, $d=\ip{K,K'}_F$, $e=\ip{K',K'}_F$ and  $K= (K(k^{t-1}))_c$, $K'=(K( \kappa_{\sigma_t^*} ))_c$.
\end{proposition}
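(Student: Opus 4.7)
My plan is to reduce the claim to a one-dimensional smooth optimization problem on a compact interval and then solve the first-order condition explicitly.

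First, I would observe that $\eta \mapsto K(k^{t-1}+\eta \kappa_{\sigma_t^*})$ is affine because the map $k \mapsto K(k)$ is linear in $k$, and centering $M \mapsto C_n M C_n$ is also linear. Hence $(K(k^{t-1}+\eta \kappa_{\sigma_t^*}))_c = K + \eta K'$, where $K=(K(k^{t-1}))_c$ and $K'=(K(\kappa_{\sigma_t^*}))_c$ as in the statement. Plugging this into the definition of $f$ and factoring out the constant $\|\hat{K}^*_c\|_F$ from the denominator, maximizing $f(k^{t-1}+\eta\kappa_{\sigma_t^*})$ over $\eta \in [0,\eta_{\max}]$ is equivalent to maximizing
\[
h(\eta) \;=\; \frac{\ip{K+\eta K',\hat{K}^*_c}_F}{\|K+\eta K'\|_F} \;=\; \frac{a+\eta b}{\sqrt{c+2\eta d + \eta^2 e}},
\]
using the shorthand $a,b,c,d,e$ defined in the statement (and using $\ip{K+\eta K',K+\eta K'}_F = c + 2\eta d + \eta^2 e$).

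Next, I would compute $h'(\eta)$ by the quotient rule; the key algebraic step is that after clearing the positive common factor $(c+2\eta d+\eta^2 e)^{3/2}$, the numerator becomes
\[
b(c+2\eta d + \eta^2 e) - (a+\eta b)(d+\eta e) \;=\; (bc-ad) + \eta(bd - ae),
\]
since the $\eta^2$ terms cancel. Setting this to zero yields the \emph{linear} equation whose (unique, when $bd-ae\neq 0$) solution is $(ad-bc)/(bd-ae)$. When $bd-ae=0$, $h$ has no interior stationary point on $\mathbb{R}$, so the maximum over the compact interval is attained at a boundary point, consistent with setting $\eta^*=0$.

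Finally, since $h$ is continuously differentiable on $[0,\eta_{\max}]$ and the interval is compact, its maximum is attained either at an endpoint or at an interior stationary point. Thus the maximizer lies in $\{0,\eta_{\max}\}\cup(\{(ad-bc)/(bd-ae)\}\cap (0,\eta_{\max}))$. Taking $\max(0,\cdot)$ of the stationary-point formula folds the case when the unconstrained stationary point is negative back into the endpoint $0$; the case when it exceeds $\eta_{\max}$ is harmless because then the derivative has constant sign on $[0,\eta_{\max}]$ and the maximizer is an endpoint already present in the candidate set. Hence comparing $f$ on the three values $\{0,\eta^*,\eta_{\max}\}$ returns the true maximizer, which is the claim. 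The only real obstacle is bookkeeping in the derivative computation; once the $\eta^2$ cancellation is noted, everything else is routine.
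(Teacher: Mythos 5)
Your proposal is correct and follows essentially the same route as the paper's own proof: reduce $f(k^{t-1}+\eta\kappa_{\sigma_t^*})$ to the scalar function $(a+b\eta)/(c+2d\eta+e\eta^2)^{1/2}$ via linearity of $k\mapsto K(k)$ and of centering, observe that the derivative's numerator is linear in $\eta$ because the quadratic terms cancel, and solve the resulting linear first-order condition. You are in fact slightly more explicit than the paper about the boundary and degenerate cases ($bd-ae=0$, stationary point outside $[0,\eta_{\max}]$), but the argument is the same.
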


The pseudocode of the full algorithm is presented in Algorithm~\ref{alg:continuous_method}.
The algorithm needs the data, the number of iterations  ($T$) and a tolerance ($\theta$) parameter, in addition to a parameter $\eps$ used in the initialization phase and $\eta_{\max}$. The parameter $\eps$ is used in the initialization step to avoid division by zero, and its value has little effect on the performance. Note that the cost of computing a kernel-matrix, or the inner product of two such matrices is $O(n^2)$. Therefore, the complexity of the algorithm (with a naive implementation) is at least quadratic in the number of samples. The actual cost will be strongly influenced by how many of these kernel-matrix evaluations (or inner product computations) are needed in~\eqref{eq:sigma_star}. In the lack of a better understanding of this, we include actual running times in the experiments, which give a rough indication of the computational limits of the procedure.

\if0
 In our case, the objective is to maximize the empirical alignment. Let $k^{t-1}$ be the expansion of base kernels built up to iteration $t-1$. At iteration $t$, the algorithm chooses a base kernel that if added, has the most increase in the objective function, i.e.
\begin{eqnarray}
	\kappa^* &=& \argmax_{\kappa_\sigma} \quad \lim_{\epsilon \to 0} \frac{f(k^{t-1}+\epsilon \kappa_\sigma) - f(k^{t-1})}{\epsilon} \nonumber \\
  				 &=& \argmax_{\kappa_\sigma} \quad \frac{\textbf{d} f}{\textbf{d} \mu(\sigma)} (K^{t-1}).
\label{eq:coordinate_ascent}
\end{eqnarray}
Using the chain rule of derivatives, we have
\begin{eqnarray}
	\frac{\textbf{d} f}{\textbf{d} \mu(\sigma)} &=& \langle \frac{\textbf{d} f}{\textbf{d} K}, \frac{\textbf{d} K}{\textbf{d} \mu(\sigma)} \rangle_F \nonumber \\
	&=& \langle \frac{\textbf{d} f}{\textbf{d} K}, \kappa_\sigma \rangle_F.
\label{eq:chain_rule}
\end{eqnarray}
This means that at iteration $t$, our method chooses the base kernel that has the maximum value of the Frobenius inner product with the matrix $\frac{\textbf{d} f}{\textbf{d} K}(K^{t-1})$. Once we find $\kappa^*$, we add it along with the coefficient $\eta$ to the current combination, $k^{t-1}$:
\begin{eqnarray}
	k^{t} &\longleftarrow& k^{t-1} + \eta \kappa^*.
\end{eqnarray}
The coefficient $\eta \geq 0$ can be set to an infinitesimal value or it can be computed through line search, i.e.
\begin{eqnarray}
	\eta^* &=& \argmax_{\eta \geq 0} \frac{\langle K^{t-1} + \eta \kappa^*, YY\transpose \rangle_F}{\sqrt{\langle K^{t-1} + \eta \kappa^*, K^{t-1} + \eta \kappa^* \rangle_F}}.
\label{eq:eta_optimal}
\end{eqnarray}
Equation \eqref{eq:eta_optimal} has a closed-form solution. If the best value of $\eta$ is zero, it indicates that no further improvement in the value of alignment is possible, therefore the algorithm stops.

What distinguishes our new method from other alignment maximization algorithms is that it is applicable when we have a continuously parametrized set of base kernels. For example if the base kernels are parametrized by $\sigma$, i.e. $\kappa_\sigma(x,x')$, then finding the best base kernel is equivalent to finding the best value of kernel parameter:
\begin{eqnarray}
	\sigma^* &=& \argmax_\sigma \sum_{ij} \kappa_\sigma(x_i,x_j) \frac{\textbf{d} f}{\textbf{d} K}(K^{t-1}_{ij}).
\label{eq:problem_formulation_continuously_parametrized_gaussian_kernel}
\label{eq:optimal_sigma}
\end{eqnarray}
As we mentioned earlier, in this paper we consider base kernels with one parameter, however problem \eqref{eq:optimal_sigma} does not have such restriction. In fact as long as we can solve problem \eqref{eq:optimal_sigma} efficiently, we can apply this method to base kernels with multi-dimensional parameters.
\fi

\if0
Problem \eqref{eq:optimal_sigma} is not convex in general. \citet{argyriou2005learning} encountered a similar problem. They propose a DC-programming approach \citep{horst1999dc} to solve it \citep{argyriou2006dc}. In our experiments, we use the \textsf{fminsearch} function in Matlab that employs the simplex search method proposed by \citet{lagarias1998convergence}.

Another note regarding the algorithm is that at each iteration we center matrix $P$ in line 6 to compensate for not centering the base kernels, since we have not computed the new base kernel matrix yet. Note that $\langle P, K_c(\sigma)\rangle_F = \langle P_c, K(\sigma) \rangle_F$.
\fi

\section{Experimental Evaluation}
\label{sec:experimental_results}

In this section we compare our kernel learning method with several kernel learning methods on synthetic and real data; see Table \ref{tab:list_of_methods} for the list of methods.
 Our method is labeled CA for Continuous Alignment-based kernel learning. In all of the experiments, we use the following values with CA: $T = 50$, $\varepsilon = 10^{-10}$, and $\theta = 10^{-3}$.
The first two methods, i.e. our algorithm, and CR~\citep{argyriou2005learning}, are able to pick kernel parameters from a continuous set, while the rest of the algorithms work with a finite number of base kernels.

In Section \ref{sec:synthetic_data_experiment} we use synthetic data to illustrate the potential advantage of methods that work with a continuously parameterized set of kernels and the importance of combining multiple kernels. We also illustrate in a toy example that multi-dimensional kernel parameter search can improve performance. These are followed by the evaluation of the above listed methods on several real datasets in Section \ref{sec:real_data_experiment}.

\begin{table}[t]
\caption{List of the kernel learning methods evaluated in the experiments. The key to the naming of the methods is as follows:
CA stands for ``continuous alignment'' maximization, CR stands for ``continuous risk'' minimization, DA stands for ``discrete alignment'', D1, D2, DU should be obvious.
}
\label{tab:list_of_methods}
\begin{center}
\begin{tabular}{|l|l|}
  \hline
	\textbf{Abbr.} & \textbf{Method} \\ \hline
	CA  & Our new method\\ \hline
	CR  & From \citet{argyriou2005learning} \\ \hline
	DA  & From \citet{cortes2010two} \\ \hline
	D1  & $\ell_1$-norm MKL \citep{kloft2011lp} \\ \hline
	D2  & $\ell_2$-norm MKL \citep{kloft2011lp} \\ \hline
	DU  & Uniform weights over kernels \\ \hline
\end{tabular}
\end{center}
\end{table}

\subsection{Synthetic Data}
\label{sec:synthetic_data_experiment}

The purpose of these experiments is mainly to provide empirical proof for the following hypotheses:
{(H1)} The combination of multiple kernels can lead to improved  performance as compared to what can be achieved with a single kernel, even when in theory a single kernel from the family suffices to get a consistent classifier.
{(H2)} The methods that search the continuously parameterized families are able to find the ``key'' kernels and their combination.
{(H3)} Our method can even search multi-dimensional parameter spaces, which in some cases is crucial for good performance.

To illustrate (H1) and (H2) we have designed the following problem:
the inputs are generated from the uniform distribution over the interval $[-10,10]$. The label of each data point is determined by the function $y(x) = \textrm{sign}(f(x))$, where $f(x) = \sin(\sqrt{2}x) + \sin(\sqrt{12}x) + \sin(\sqrt{60}x)$.
Training and validation sets include $500$ data points each, while the test set includes $1000$ instances.
Figure~\ref{fig:dirichlet_synthetic}(a) shows the functions $f$ (blue curve) and $y$ (red dots).
For this experiment we use Dirichlet kernels of degree one,\footnote{We repeated the experiments using Gaussian kernels with nearly identical results.}
 parameterized with a frequency parameter $\sigma$: $\kappa_{\sigma}(x,x') = 1 + 2\cos(\sigma \|x-x'\|)$.

In order to  investigate (H1), we trained classifiers
with a single frequency kernel from the set $\sqrt{2}$, $\sqrt{12}$, and $\sqrt{60}$ (which we thought were good guesses of the single best frequencies).
The trained classifiers achieved misclassification error rates of $26.1\%$, $26.8\%$, and $28.6\%$, respectively. Classifiers trained with a pair of frequencies, i.e. $\{\sqrt{2},\sqrt{12}\}$, $\{\sqrt{2},\sqrt{60}\}$, and $\{\sqrt{12},\sqrt{60}\}$ achieved error rates of $16.4\%$, $20.0\%$, and $21.3\%$, respectively (the kernels were combined using uniform weights). Finally, a classifier that was trained with all three frequencies achieved an error rate of $2.3\%$.

Let us now turn to (H2). As shown in Figure~\ref{fig:dirichlet_synthetic}(b), the CA and CR methods both achieved a misclassification error close to what was seen when the three best frequencies were used, showing that they are indeed effective.\footnote{In all of the experiments in this paper, the classifiers for the two-stage methods were trained using the soft margin SVM method, where the  regularization coefficient of SVM was chosen by cross-validation from $10^{\{-5,-4.5,\ldots,4.5,5\}}$.}
Furthermore, Figure~\ref{fig:dirichlet_synthetic}(c) shows that the discovered frequencies are close to the frequencies used to generate the data.
For the sake of illustration, we also tested the methods which require the discretization of the parameter space. We choose  ten Dirichlet kernels with $\sigma \in \{0,1,\ldots,9\}$, covering the range of frequencies defining $f$.
As can be seen from Figure~\ref{fig:dirichlet_synthetic}(b) in this example the chosen discretization accuracy is insufficient. Although it would be easy to increase the discretization accuracy to improve the results of these methods,\footnote{Further experimentation found that a discretization below $0.1$ is necessary in this example.} the point is that if a high resolution is needed in a single-dimensional problem, then these methods are likely to face serious difficulties in problems when the space of kernels is more complex (e.g., the parameterization is multidimensional). Nevertheless, we are not suggesting that the methods which require discretization are universally inferior, but merely wish to point out that an ``appropriate discrete kernel set'' might not always be available.
\begin{figure}[htbp]
\vspace{.3in}
\centerline{\includegraphics[width=0.5\linewidth]{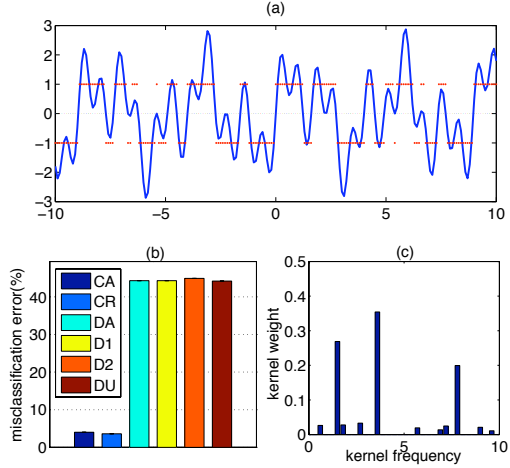}}
\vspace{.3in}
\caption{(a): The function $f(x) = \sin(\sqrt{2}x) + \sin(\sqrt{12}x) + \sin(\sqrt{60}x)$ used for generating synthetic data, along with $\text{sign}(f)$. (b): Misclassification percentages obtained by each algorithm. (c): The kernel frequencies found by the CA method.}
\label{fig:dirichlet_synthetic}
\end{figure}

To illustrate (H3) we designed a second set of problems:
The instances for the positive (negative) class are generated from a $d=50$-dimensional Gaussian distribution with  covariance matrix $C = I_{d \times d}$ and mean $\mu_1 = \rho \frac{\theta}{\|\theta\|}$ (respectively, $\mu_2 = -\mu_1$ for the negative class). Here $\rho=1.75$. \todo{(Mike) I don't understand what the instance is.  So you have two 50-dimensional Gaussians. What's X and Y?}  
The vector $\theta \in [0,1]^d$ determines the relevance of each feature in the classification task, e.g. $\theta_i=0$ implies that the distributions of the two classes have zero means in the $i$th feature, which renders this feature irrelevant. The value of each component of vector $\theta$ is calculated as $\theta_i = (i/d)^\gamma$, where $\gamma$ is a constant that determines the relative importance of the elements of $\theta$. We generate seven datasets with $\gamma \in \{0,1,2,5,10,20,40\}$.
For each value of $\gamma$, the training set consists of $50$ data points (the prior distribution for the two classes is uniform). The test error values are measured on a test set with $1000$ instances. We repeated each experiment $10$ times and report the average misclassification error and alignment measured over the test set along with the running time.

We test two versions of our method: one that uses a family of Gaussian kernels with a common bandwidth (denoted by CA-1D), and another one (denoted by CA-nD) that searches in the space $(\kappa_\sigma)_{\sigma\in (0,\infty)^{50}}$, where each coordinate has a separate bandwidth parameter, $\kappa_{\sigma}(x,x') = \exp(-\sum_{i=1}^{d}  (x_i-x_i')^2/\sigma_i^{2} )$. Since the training set is small, one can easily overfit while optimizing the alignment. Hence, we modify the algorithm to shrink the values of the bandwidth parameters to their common average value by modifying~\eqref{eq:sigma_star}:
\begin{eqnarray}
\sigma_t^* = \argmin_{\sigma\in \Sigma} \,
	&& -\ipL{\, K(\kappa_\sigma), F'(\, (K(k^{t-1}))_c ) \, }_F\, \nonumber \\ && + \lambda  \| \sigma - \bar{\sigma} \|_2^2,
\label{eq:new_sigma_star}
\end{eqnarray}
where, $\bar{\sigma} = \frac{1}{r} \sum_{i=1}^r \sigma_i$ and $\lambda$ is a regularization parameter.
We also include results obtained for
 finite kernel learning methods. For these methods, we generate $50$ Gaussian kernels with bandwidths $\sigma \in m g^{\{0,\ldots,49\}}$, where $m=10^{-3}$, and $g\approx1.33$. Therefore, the bandwidth range constitutes a geometric sequence from $10^{-3}$ to $10^3$.
 Further details of the experimental setup can be found in Section~\ref{apx:50d} of the supplementary material.

Figure \ref{fig:my_toy_experiment} shows the results. Recall that the larger the value of $\gamma$, the larger the number of nearly irrelevant features. Since methods which search only a one-dimensional space cannot differentiate between relevant and irrelevant features, their misclassification rate increases with $\gamma$. Only CA-nD is able to cope with this situation and even improve its performance.
We observed that without regularization, though, CA-nD drastically overfits (for small values of $\gamma$).
We also show the running times of the methods to give the reader an idea about the scalability of the methods. The running time of CA-nD is larger than CA-1D both because of the use of cross-validation to tune $\lambda$ and because of the increased cost of the multidimensional search. Although the large running time might be a problem, for some problems, CA-nD might be the only method to deliver good performance amongst the methods studied.\footnote{We have not attempted to run a multi-dimensional version of the CR method, since already the one-dimensional version of this method is at least one order of magnitude slower than our CA-1D method.}
\begin{figure*}[htbp]
\vspace{.3in}
\centerline{\includegraphics[width=0.75\linewidth]{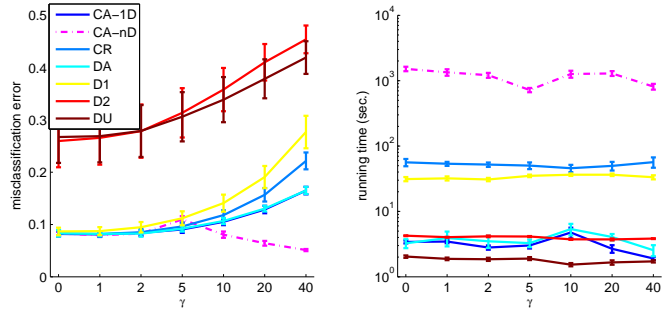}}
\vspace{.3in}
\caption{Performance and running time of various methods for a $50$-dimensional synthetic problem as a function of the relevance parameter $\gamma$. Note that the number of irrelevant features increases with $\gamma$. For details of the experiments, see the text.}
\label{fig:my_toy_experiment}
\end{figure*}

\if0
The methods based on a finite set of base kernels were provided with a set of kernels that did not contain close matches to the key generating frequencies.  Not surprisingly, \todo{Why is this unsurprising?} all of these methods performed poorly on this example.  Note that we are not claiming that finite kernel learning algorithms perform poorly in general.  This example is just intended to illustrate that it is critical for the base kernel set to include appropriate choices of parameter values.  Without any prior knowledge it may be difficult to come up with a proper set of base kernels for finite kernel learning methods.  For continuous kernel learning methods, it appears enough for the right parameterization of kernels to be found as good parameters from the continuous set can be identified and combined.
\fi

\subsection{Real Data}
\label{sec:real_data_experiment}

We evaluate the methods listed in Table \ref{tab:list_of_methods} on several binary classification tasks from MNIST and the UCI Letter recognition dataset, along with several other datasets from the UCI machine learning repository \citep{frank2010uci} and Delve datasets (see, \url{http://www.cs.toronto.edu/~delve/data/datasets.html}).

\paragraph{MNIST.}
In the first experiment, following~\citet{argyriou2005learning}, we choose 8 handwritten digit recognition tasks of various difficulty from the MNIST dataset \citep{lecun-mnist-web-2010}. This dataset consists of $28\times28$ images with pixel values ranging between $0$ and $255$. In these experiments, we used Gaussian kernels with parameter $\sigma$: $G_\sigma(x,x') = \exp(-\|x-x'\|^2/\sigma^2)$. Due to the large number of attributes (784) in the MNIST dataset, we only evaluate the 1-dimensional version of our method. For the algorithms that work with a finite kernel set, we pick 20 kernels with the value of $\sigma$ picked from an equidistant discretization of interval $[500,50000]$.
In each experiment, the training and validation sets consist of $500$ and $1000$ data points, while the test set has $2000$ data points.
We repeated each experiment 10 times. Due to the lack of space, the test-set error plots for all of the problems can be found in the supplementary material (see Section~\ref{apx:realdata}). In order to give an overall impression of the algorithms' performance, we ranked them based on the results obtained in the above experiment. Table~\ref{tab:median_rank_and_time} reports the median ranks of the methods for the experiment just described.

Overall, methods that choose $\sigma$ from a continuous set outperformed their finite counterparts. This suggests again that for the finite kernel learning methods the range of $\sigma$ and the discretization of this range is important to the accuracy of the resulting classifier.

\if0
In order to better understand the relationship between the width of the Gaussian kernel and the accuracy of the resulting classifiers, we measured the misclassification error of a classifier trained with a single Gaussian kernel characterized by $\sigma$.  In Figure \ref{fig:mnist_error_sigma}, we plot the error rates versus the value of $\sigma$ in the range of $[100,10000]$ for four tasks from the MNIST dataset.
\begin{figure}[htbp]
\vspace{.3in}
\centerline{\includegraphics[width=1\linewidth]{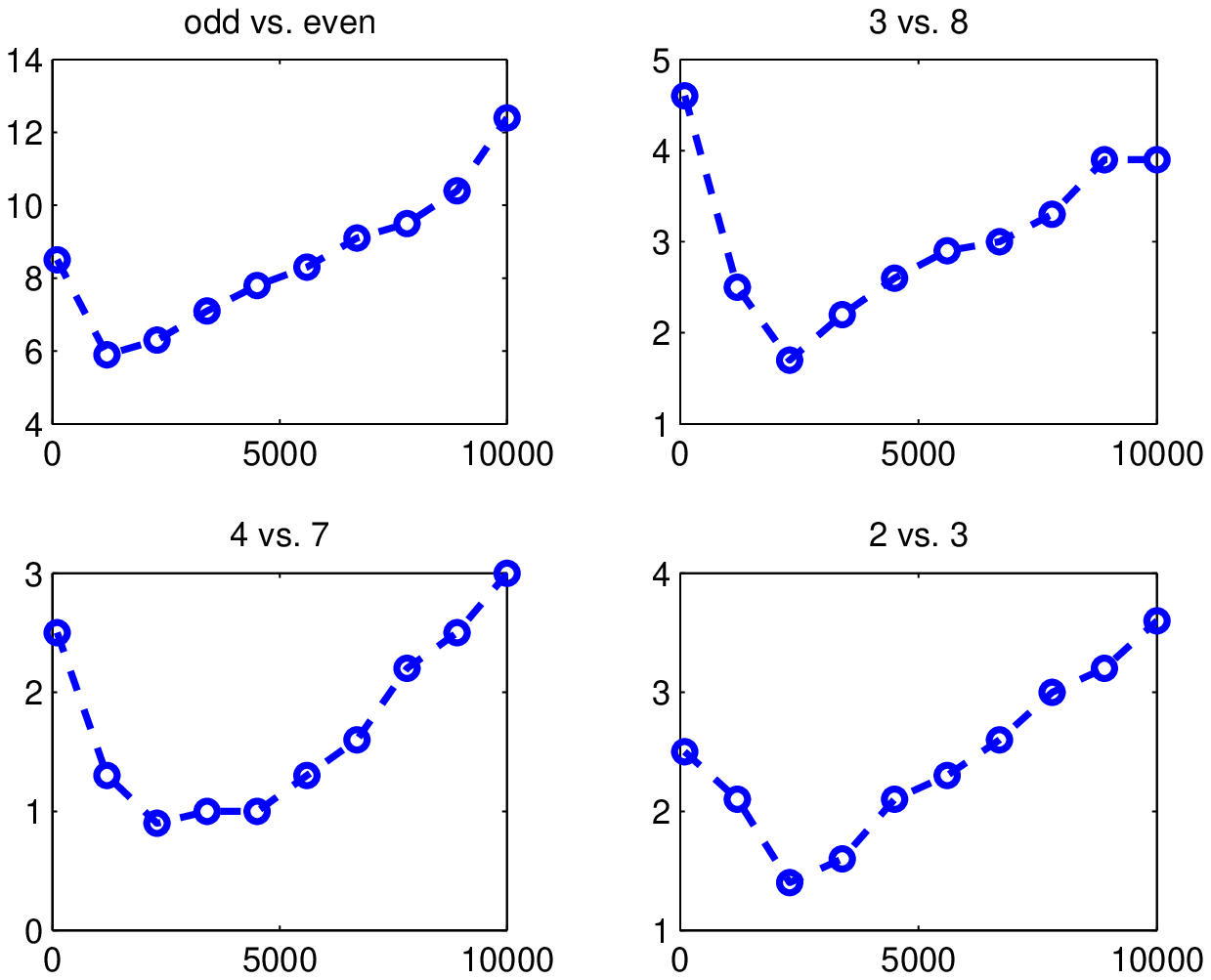}}
\vspace{.3in}
\caption{Misclassification percentages of single Gaussian kernels evaluated over four tasks from the MNIST dataset. The horizontal axis represents the width of the Gaussian kernel ($\sigma$), and the vertical axis is the misclassification percentage of the resulting classifier.}
\label{fig:mnist_error_sigma}
\end{figure}
These results suggest that for the MNIST dataset it is sufficient to consider values of $\sigma$ in the interval $[100,8000]$  to perform well across the chosen four tasks. We therefore repeated the first experiment with $\sigma$'s chosen from $[100,8000]$, again choosing $10$ values for $\sigma$ with an equidistant discretization of the selected interval. The results with the new range are shown in Figure \ref{fig:result_mnist_100_to_8000}.
The first two methods (CA and CR) are not affected by this change, since they learn kernel parameters from a continuous set. However, the accuracy of the finite kernel learning methods improved as compared to the previous results. This is because values of $\sigma$ outside of the new interval are not needed and because due to the smaller size of the new interval, the resolution within this relevant range is increased.
\fi

\if0
When the base kernels are chosen from interval $[100,50000]$, the kernels with $\sigma < 5000$ are not present in the kernel set (except for the kernel with $\sigma=100$). As it can be seen in Figure~\ref{fig:mnist_error_sigma}, there are kernels with $\sigma < 5000$ that perform well on the MNIST tasks individually. In the second experiment, when we choose base kernels parameters from interval $[100,8000]$ these kernels are present in the base kernel set. Therefore, we expect better performance by the finite kernel learning methods when base kernels are chosen from the smaller interval. \todo[inline]{MB: Why?  We need to give some explanation for how this could possibly improve things to consider a smaller set. Arash: I added the explanation at the end of the previous paragraph. I think this is addressed now. If you think it's good now, remove this todo. Todo for Mike.}
\fi

\begin{table*}[t]
\caption{Median rank and running time (sec.) of kernel learning methods obtained in experiments.}
\label{tab:median_rank_and_time}
\begin{center}
\begin{tabular}{c l|c|c|c|c|c|c|c|}
					&	 & CA-1D & CA-nD & CR & DA  & D1  & D2 & DU  \\ \hline
	\multirow{3}{*}{Rank} &MNIST & 1 & N/A &  2  & 4.5 & 4.5 & 5  & 4   \\
												&Letter& 1 & 4.5 & 2  & 3.5   & 7   & 6  & 5   \\
												&11 datasets& 3 & 2 & 3 & 3 & 4 & 6 & 6 \\ \hline \hline
	\multirow{2}{*}{Time} &MNIST & $12\pm1$ & N/A & $377\pm56$ & $31\pm1$ & $57\pm6$ & $58\pm3$ & $10\pm1$ \\
												&Letter& $9\pm1$ & $1986\pm247$ & $590\pm21$ & $11\pm1$ & $21\pm1$ & $22\pm1$ & $5\pm1$ \\ \hline
\end{tabular}
\end{center}
\end{table*}

\paragraph{UCI Letter Recognition.}
In another experiment, we evaluated these methods on 12 binary classification tasks from the UCI Letter recognition dataset. This dataset includes $20000$ data points of the 26 capital letters in the English alphabet.
For each binary classification task, the training and validation sets include $300$ and $200$ data points, respectively. The misclassification errors are measured over $1000$ test points. As with MNIST, we used Gaussian kernels. However, in this experiment, we ran our method with both 1-dimensional and $n$-dimensional search procedures. The rest of the methods learn a single parameter and the finite kernel learning methods were provided with 20 kernels with $\sigma$'s chosen from the interval $[1,200]$ in an equidistant manner. The plots of misclassification error and alignment are available in the supplementary material (see Section~\ref{apx:realdata}). We report the median rank of each method in Table \ref{tab:median_rank_and_time}. While the 1-dimensional version of our method outperforms the rest of the methods, the classifier built on the kernel found by the multi-dimensional version of our method did not perform well.  We examined the value of alignment between the learned kernel and the target label kernel on the test set achieved by each method. The results are available in the supplementary material (see Section~\ref{apx:realdata}). The multidimensional version of our method achieved the highest value of alignment in every task in this experiment.  Higher value of alignment between the learned kernel and the ideal kernel does not necessarily translate into higher value of accuracy of the classifier.  Aside from this observation, the same trends observed in the MNIST data can be seen here. The continuous kernel learning methods (CA-1D and CR) outperform the finite kernel learning methods.

\paragraph{Miscellaneous datasets.}
In the last experiment we evaluate all methods on 11 datasets chosen from the UCI machine learning repository and Delve datasets. Most of these datasets were used previously to evaluate kernel learning algorithms \citep{lanckriet2004learning,cortes2009l2,cortes2009learning,cortes2010two,rakotomamonjy2008simplemkl}. The specification of each dataset and the performance of each method are available in the supplementary material (see Section~\ref{apx:realdata}). The median rank of each method is shown in Table \ref{tab:median_rank_and_time}. Contrary to the Letter experiment, in this case the multi-dimensional version of our method outperforms the rest of the methods.


\paragraph{Running Times.}  We measured the time required for each run and each kernel learning method in the MNIST and the UCI Letter experiments. In each case we took the average of the running time of each method over all tasks. The average required time along with the standard error values are shown in Table \ref{tab:median_rank_and_time}. Among all methods, the DU method is fastest, which is expected, as it requires no additional time to compute kernel weights. The CA-1D is the fastest among the rest of the methods. In these experiments our method converges in less than 10 iterations (kernels). The general trend is that one-stage kernel learning methods, i.e., D1, D2, and CR, are slower than two-stage methods, CA and DA. Among all methods, the other continuous kernel learning method, CR, is slowest, since (1) it is a one-stage algorithm and (2) it usually requires more iterations (around $50$) to converge. We also examined the DC-Programming version of the CR method \cite{argyriou2006dc}. While it is faster than the original gradient-based approach (roughly three times faster), it is still significantly slower than the rest of the methods in our experiments. 


\if0
The continuous kernel learning methods, i.e. ACG and ARG are iterative algorithms that require no convex optimization solver. In these experiments we observed that our method, ACG finds the best kernel combination after only a few iterations (usually less than 10), whereas the ARG algorithm needs more iterations to terminate. For example in different tasks of the UCI Letter recognition experiment the ACG algorithm finished in about 5 iterations, whereas the ARG algorithm required more than $50$.  As a result, we found that the ACG method was usually the fastest of all the methods, while ARG was usually the slowest.  Among finite kernel learning methods, the alignment based methods, i.e. ALP, CA, and LA were generally faster than the others.  However, ALP became slower as the training set increased in size.
\fi

\section{Conclusion and Future Work}

We presented a novel method for kernel learning. This method addresses the problem of learning a kernel in the positive linear span of some continuously parameterized kernel family. The algorithm implements a steepest ascent approach to forward stagewise additive modeling to maximize an empirical centered correlation measure between the kernel and the empirical approximation to the ideal response-kernel.
The method was shown to perform well in a series of experiments, both with synthetic and real-data. We showed that in single-dimensional kernel parameter search, our method outperforms standard multiple kernel learning methods without the need to discretizing the parameter space. While the method of \citet{argyriou2005learning}  also benefits from searching in a continuous space, it was seen to require significantly more computation time compared to our method. We also showed that our method can successfully deal with high-dimensional kernel parameter spaces, which, at least in our experiments, 
the method of  \citet{argyriou2005learning,argyriou2006dc} had problems with. \todo{Why? and what about the DC variant!?}

The main lesson of our experiments is that the methods that start by discretizing the kernel space without using the data might lose the potential to achieve good performance before any learning happens. 

We think that currently our method is the most efficient method to design data-dependent dictionaries that provide competitive performance. 
It remains an interesting problem to be explored in the future whether there exist methods that are provably efficient and yet their performance remains competitive. Although in this work we directly compared our method to finite-kernel methods, it is also natural to combine dictionary search methods (like ours) with finite-kernel learning methods. However, the thorough investigation of this option remains for future work.

A secondary outcome of our experiments is the observation that although test-set alignment is generally a good indicator of good predictive performance, a larger test-set alignment does not necessarily transform into a smaller misclassification error. Although this is not completely unexpected, we think that it will be important to thoroughly explore the implications of this observation.

\if0
With the help of appropriately constructed synthetic data, we illustrated the potential advantage of this method which avoids the preselection of a finite base set from a continuously parameterized kernel family.
With this example we also illustrated that in some cases the combination of multiple kernels gives significantly better performance than any single kernel (from a base family) alone.
Further experiments with real data indicated that our method is competitive with other state-of-the-art multiple kernel learning methods.
We also showed that our method can be used to learn multi-dimensional kernel parameters. A property that the standard kernel learning methods that work with a set of base kernels cannot achieve due to the exponential growth in the number of base kernels.
\fi

\bibliographystyle{apalike}
\bibliography{./references}

\clearpage
\newpage
\appendix
\section{Proofs}
\subsection{Proof of Proposition~\ref{prop:dd}}
First, notice that the limit in~\eqref{eq:sigmatdef} is a directional derivative, $D_{\kappa_\sigma} f(k^{t-1})$.
By the chain rule,
\[
D_{\kappa_\sigma} f(k^{t-1})
= \ip{ \, K(\kappa_\sigma), \, F'_c( K(k^{t-1}) ) \,}_F\,,
\]
where, for convenience, we defined $F_c(K) = A_c(K,\hat{K}^*)$.
Define
\[
F(K) = \ip{K,\hat{K}^*_c}_F/(\norm{K}_F \norm{\hat{K}^*_c}_F)
\]
so that $F_c(K) = F( K_c )$.
Some calculations give that
\[
F'(K) = \frac{ \hat{K}^*_c - \norm{K}_F^{-2}\ip{ K, \hat{K}^*_c}_F\, K }
				   { \norm{K}_F \norm{\hat{K}^*_c}_F}\,
\]
(which is the function defined in~\eqref{eq:Fder}).
We claim that the following holds:
\begin{lemma}
$F_c'(K) = C_n F'( K_c ) C_n$.
\end{lemma}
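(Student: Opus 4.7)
The plan is to prove the identity by the chain rule for Fréchet derivatives, using the fact that the centering operation $K \mapsto K_c = C_n K C_n$ is linear and self-adjoint with respect to the Frobenius inner product. The gradient $F_c'(K)$ is characterized by the relation
\[
\ip{F_c'(K), H}_F = D_H F_c(K) \quad \text{for all symmetric } H \in \real^{n \times n},
\]
so it suffices to compute the directional derivative of $F_c$ in an arbitrary direction $H$ and read off the gradient from the inner-product form.

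First, I would note that the map $\Gamma: K \mapsto C_n K C_n$ is linear, hence $D\Gamma(K)[H] = \Gamma(H) = C_n H C_n$. Since $F_c = F \circ \Gamma$, the chain rule gives
\[
D_H F_c(K) = D_{C_n H C_n} F(K_c) = \ip{F'(K_c), C_n H C_n}_F.
\]
Next, I would use the fact that $C_n$ is symmetric (and idempotent, though only symmetry is needed here) to move the centering matrices across the Frobenius inner product via its cyclic/trace property:
\[
\ip{F'(K_c), C_n H C_n}_F = \tr\!\bigl(F'(K_c)^\top C_n H C_n\bigr) = \tr\!\bigl(C_n F'(K_c)^\top C_n H\bigr) = \ip{C_n F'(K_c) C_n, H}_F.
\]
Combining these two displays yields $\ip{F_c'(K), H}_F = \ip{C_n F'(K_c) C_n, H}_F$ for every $H$, which forces $F_c'(K) = C_n F'(K_c) C_n$, as claimed.

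There is really no substantive obstacle: the only things to verify carefully are that $\Gamma$ is self-adjoint (i.e.\ $\ip{A, C_n H C_n}_F = \ip{C_n A C_n, H}_F$, which follows immediately from $C_n^\top = C_n$ and the cyclic property of the trace) and that the ambient space on which we are differentiating is the space of symmetric matrices, so that the gradient is well defined as a symmetric matrix (this is fine because $\hat{K}^*_c$ is symmetric and $F'(K)$ in~\eqref{eq:Fder} is manifestly symmetric when $K$ is). The result then follows in one line from the chain rule and self-adjointness of centering.
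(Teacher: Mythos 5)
Your argument is correct and is essentially the paper's own proof: both compute the directional derivative of $F_c$ by exploiting the linearity of the centering map $K \mapsto C_n K C_n$, move the $C_n$ factors across the Frobenius inner product via the cyclic property of the trace (i.e., self-adjointness of centering), and conclude by uniqueness of the gradient. The only cosmetic difference is that you phrase the first step as the chain rule for Fr\'echet derivatives while the paper writes out the first-order expansions explicitly.
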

\begin{proof}
By the definition of derivatives, as $H\ra 0$,
\[F(K+H)-F(K) = \ip{ F'(K),H }_F + o(\|H\|).\]
Also,
 \[F_c(K+H)-F_c(K) = \ip{ F'_c(K),H }_F + o(\|H\|).\]
Now,
\begin{align*}
F_c(K+H) - F_c(K) &= F(C_nK C_n+C_n H C_n) - F(C_n K C_n)\\
&=\ip{F'(K_c), C_n H C_n }_F + o(\|H\|)\\
&=\ip{C_n F'(K_c) C_n, H }_F + o(\|H\|),
\end{align*}
 where the last property follows from the cyclic property of trace.
Therefore, by the uniqueness of derivative, $F_c'(K) = C_n F'(K_c) C_n$.
\end{proof}
Now, notice that $C_n F'(K_c) C_n = F'(K_c)$.
Thus, we see that the value of $\sigma_t^*$ can be obtained by
\[
\sigma_t^* = \argmax_{\sigma\in \Sigma} \,
	\ipL{\, K(\kappa_\sigma), F'(\, (K(k^{t-1}))_c ) \, }_F\,,
\]
which was the statement to be proved.
\subsection{Proof of Proposition~\ref{prop:pp2}}
Let $g(\eta) = f(k^{t-1}+\eta \kappa_{\sigma_t^*})$. Using the definition of $f$, we find that with some constant $\rho>0$,
\[
g(\eta) =\rho\, \frac{a+b \eta }{(c+2d\eta+e\eta^2)^{1/2}}.
\]
Notice that here the denominator is bounded away from zero (this follows from the form of the denominator of $f$).
In particular, $e>0$.
Further,
\begin{align}
\label{eq:glim}
\lim_{\eta\ra \infty} g(\eta) = - \lim_{\eta\ra -\infty} g(\eta) = \rho \frac{ b }{ \sqrt{e} }.
\end{align}
Taking the derivative of $g$ we find that
\[
g'(\eta) = \rho \, \frac{bc-ad+(bd-ae)\eta}{(c+2d\eta + e\eta^2)^{3/2}}.
\]
Therefore, $g'$ has at most one root and $g$ has at most one global extremum, from which the result follows by solving for the root of $g'$ (if $g'$ does not have a root, $g$ is constant).
\if0
Because of the choice of $\sigma_t^*$ (steepest ascent direction), $g'(0)\ge 0$. If $bd-ae=0$
and $g'(0)>0$ then $g'(\eta)>0$ for all $\eta$. This contradicts~\eqref{eq:glim}, therefore if $bd-ae=0$ then we must also have $bc-ad=0$ and $g$ is constant. Therefore, any value of $\eta$ works as the maximizer of $g$. If $bd-ae$ is nonzero, $g'$ has a unique root, $\eta^* = (ad-bc)/(bd-ae)$, which makes $\eta^*$  the global extremum point of $g$. The rest follows because $g$ using what we just said that $g$ has a single global extremum point.
\fi
\section{Details of the numerical experiments}
In this section we provide further details and data for the numerical results.
\subsection{Non-Convexity Issue}
\label{sec:non_convexity}
As we mentioned in Section~\ref{sec:new_method}, our algorithm may need to solve a non-convex optimization problem in each iteration to find the best kernel parameter. Here, we explore this problem numerically, by plotting the function to be optimized in the case of a Gaussian kernel with a single bandwidth parameter.
In particular, we plotted the objective function of Equation~\ref{eq:sigma_star} with its sign flipped, therefore we are interested in the local minima of function $h(\sigma) = -\ipL{\, K(\kappa_\sigma), F'(\, (K(k^{t-1}))_c ) \, }_F$, see
 Figure \ref{fig:result_sigma}.
The function $h$ is shown for some iterations of some of the tasks from both the MNIST and the UCI Letter experiments. The number inside parentheses in the caption specifies the corresponding iteration of the algorithm. On these plots, the objective function does not have more than 2 local minima. Although in some cases the functions have some steep parts (at the scales shown), their optimization does not seem very difficult.
\begin{figure*}[t]
\vspace{.3in}
\centerline{\includegraphics[width=0.8\linewidth]{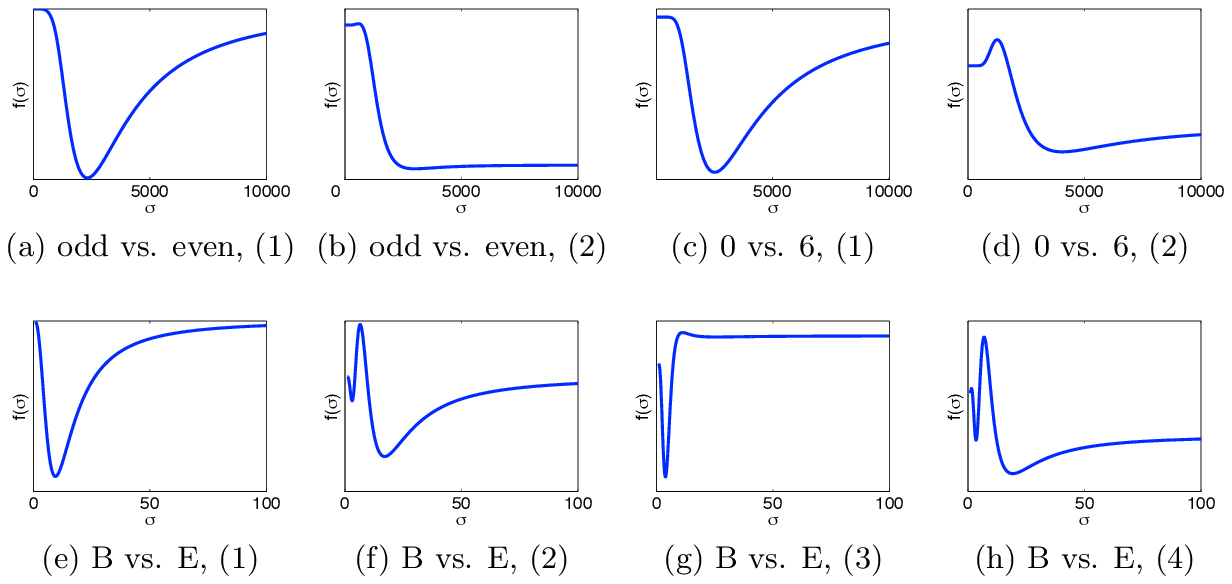}}
\vspace{.3in}
\caption{The flipped objective function underlying~\eqref{eq:sigma_star} as a function of $\sigma$, the parameter of a Gaussian kernel in selected MNIST and UCI Letter problems. Our algorithm needs to find the minimum of
these functions (and similar ones).}
\label{fig:result_sigma}
\end{figure*}


\subsection{Details of the 50-dimensional synthetic dataset experiment}
\label{apx:50d}
The 1-dimensional version of our algorithm, CA-1D, and the CR method, employ Matlab's \textsf{fmincon} function with multiple restarts from the set $10^{\{-3,\ldots,5\}}$, to choose the kernel parameters. The multi-dimensional version of our algorithm, CA-nD, uses \textsf{fmincon} only once, since in this particular example the search method runs on a $50$-dimensional search space, which makes the search an expensive operation. The starting point of the CA-nD method is a vector of equal elements where this element is the weighted average of the kernel parameters found by the CA-1D method, weighted by the coefficient of the corresponding kernels.

The soft margin SVM regularization parameter is tuned from the set $10^{\{-5,-4.5,\ldots,4.5,5\}}$ using an independent validation set with $1000$ instances. We also tuned the value of the regularization parameter in Equation~\eqref{eq:new_sigma_star} from $10^{\{-5,\ldots,14\}}$ using the same validation set (the best value of $\lambda$ is the one that achieves the highest value of alignment on the validation set). We decided to use a large validation set, following essentially the practice of \citet[Section 6.1]{kloft2011lp}, to make sure that in the experiments reasonably good regularization parameters are used, i.e., to factor out the choice of the regularization parameters. This might bias our results towards CA-nD, as compared to CA-1D, though similar results were achieved with a smaller validation set of size $200$. As a final detail note that D1, D2 and CR also use the validation set for choosing the value of their regularization factor, and together with the regularizer, the weights also. Hence, their results might also be positively biased (though we don't think this is significant, in this case).

The running times shown in Figure~\ref{fig:my_toy_experiment} include everything from the beginning to the end, i.e., from learning the kernels to training the final classifiers (the extra cross-validation step is what makes CA-nD expensive).

Figure~\ref{fig:MyToyExperiment_Alignment} shows the (centered) alignment values for the learned kernels (on the test data) as a function of the relevance parameter $\gamma$.  It can be readily seen that the multi-dimensional method has a real-edge over the other methods when the number of irrelevant features is large, in terms of kernel alignment. As seen on Figure~\ref{fig:MyToyExperiment_Alignment}, this edge is also transformed into an edge in terms of the test-set performance. Note also that the discretization is fine enough so that the alignment maximizing finite kernel learning method DA can  achieve the same alignment as the method CA-1D.
\begin{figure}[t]
\vspace{.3in}
\centerline{\includegraphics[width=0.5\linewidth]{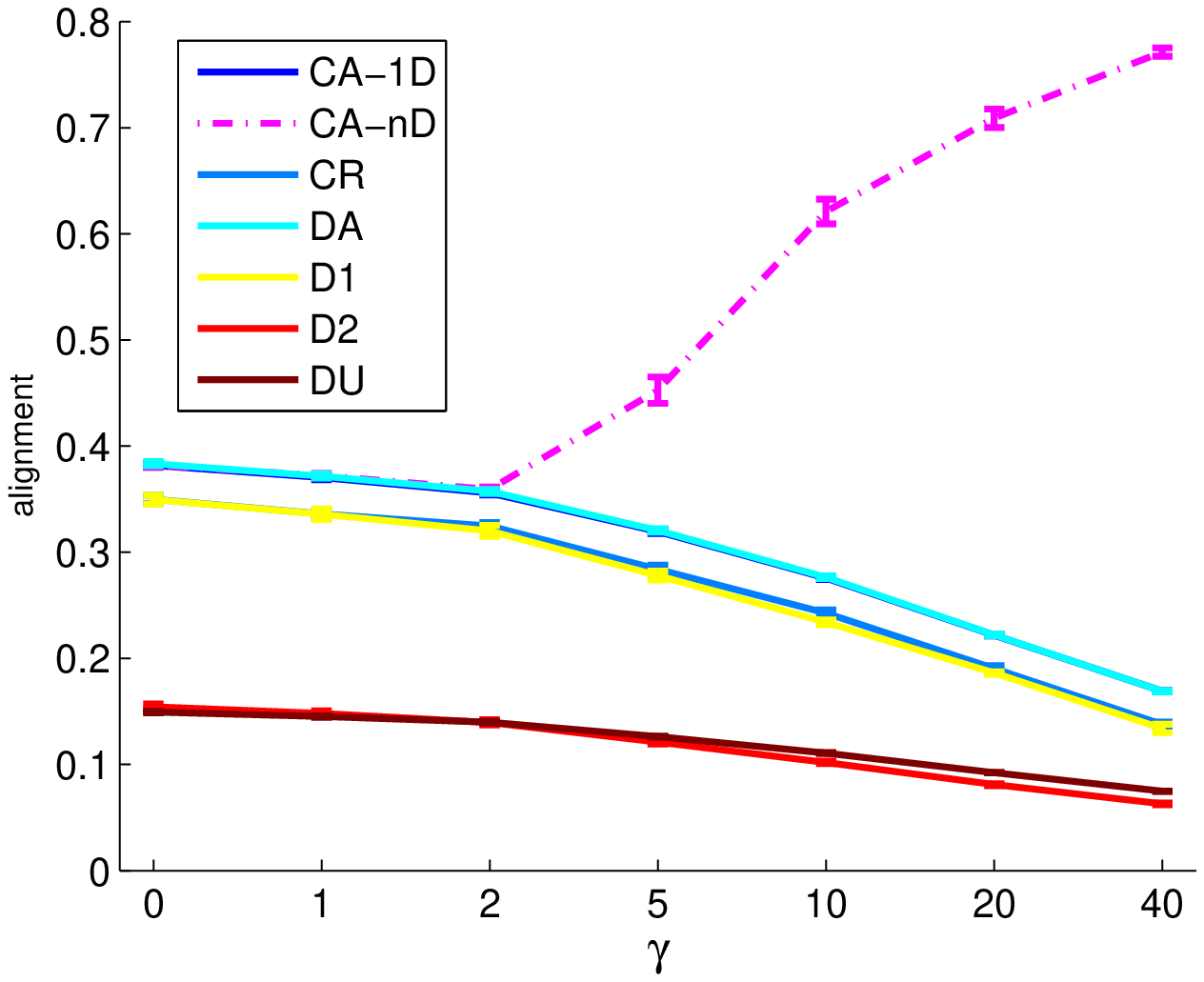}}
\vspace{.3in}
\caption{Alignment values in the $50$-dimensional synthetic dataset experiment.}
\label{fig:MyToyExperiment_Alignment}
\end{figure}

\subsection{Detailed results for the real datasets}
\label{apx:realdata}
\begin{figure*}[htbp]
\vspace{.3in}
\centerline{\includegraphics[width=0.8\linewidth]{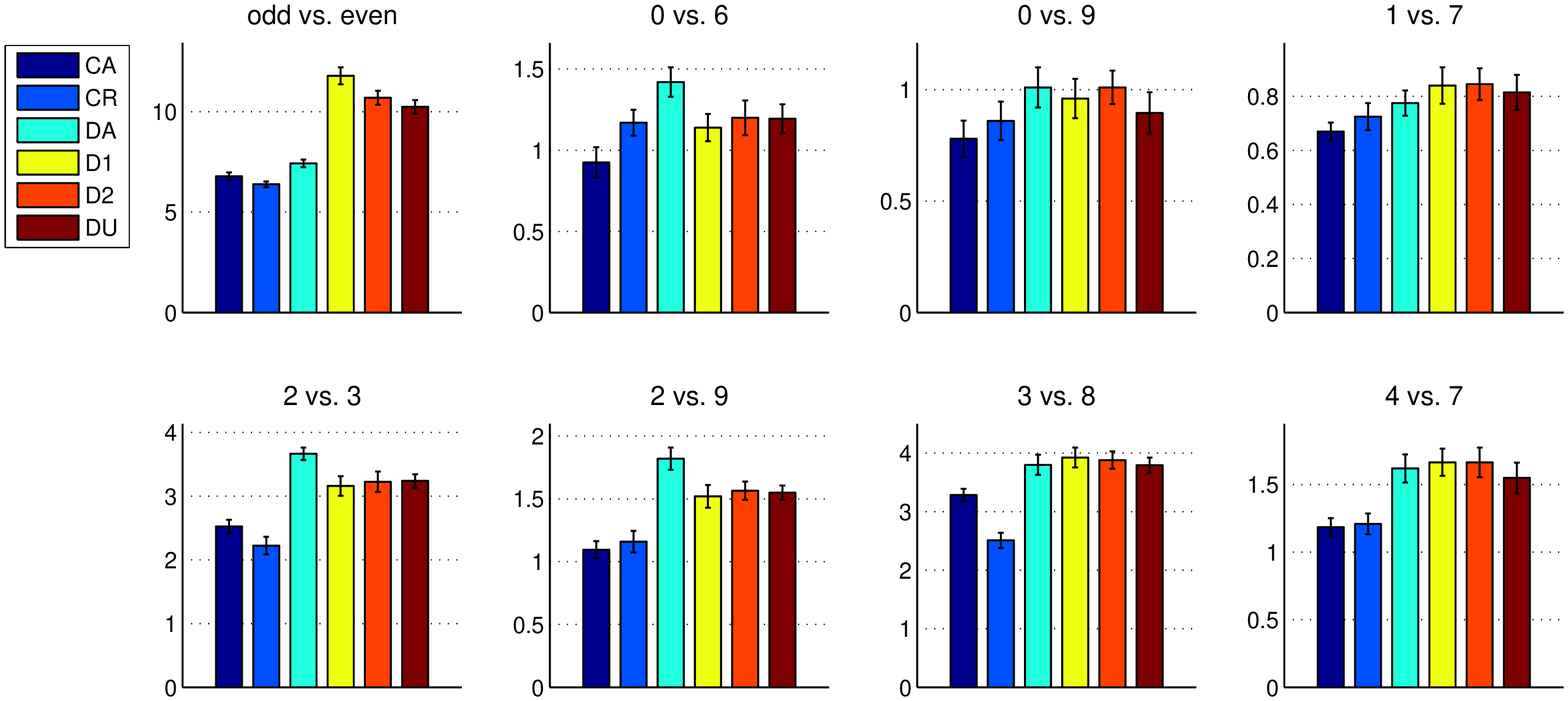}}
\vspace{.3in}
\caption{Misclassification percentages in different tasks of the MNIST dataset.}
\label{fig:result_new_mnist_testerr}
\end{figure*}


\begin{figure*}[htbp]
\vspace{.3in}
\centerline{\includegraphics[width=0.8\linewidth]{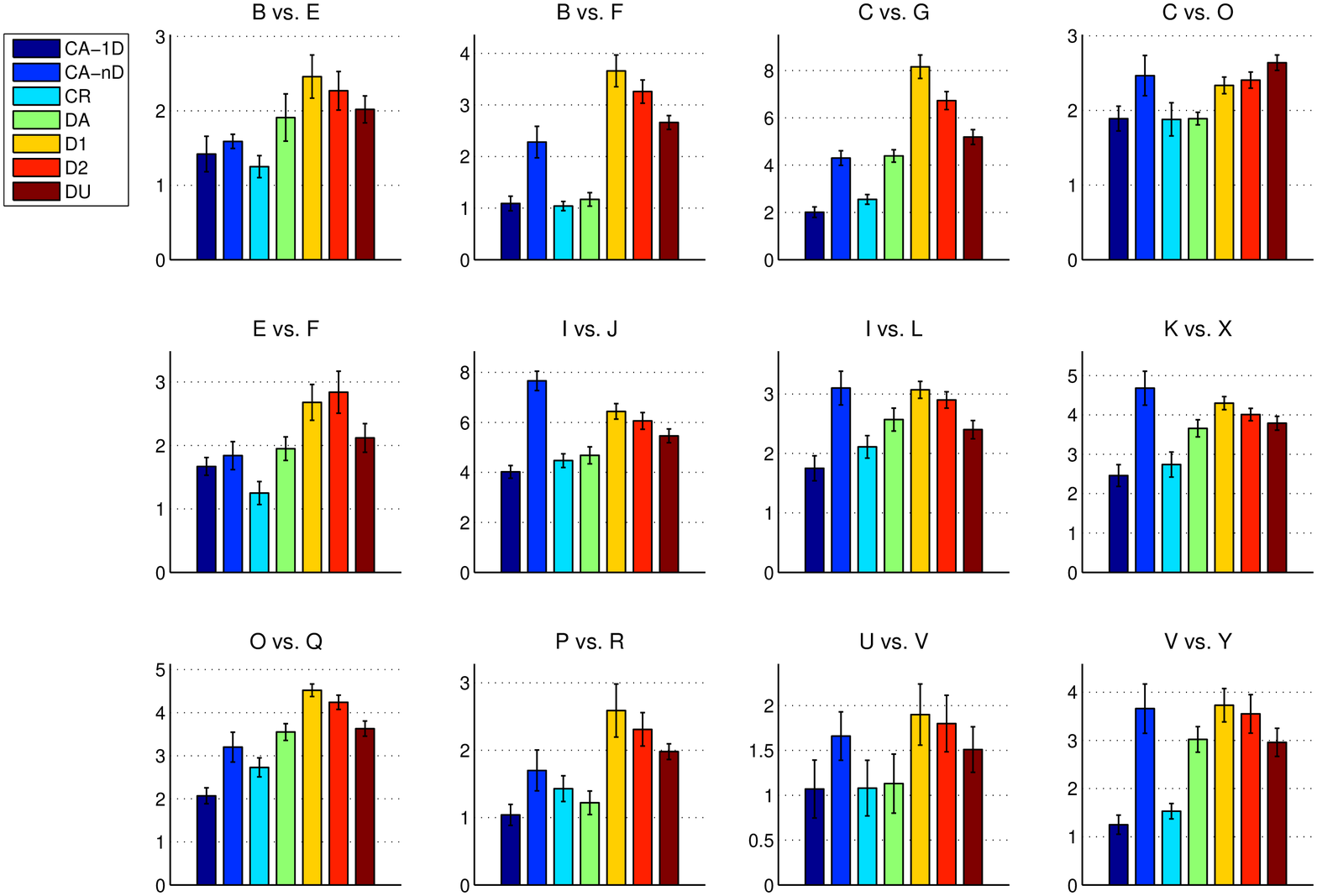}}
\vspace{.3in}
\caption{Misclassification percentages in different tasks of the UCI Letter recognition dataset.}
\label{fig:new_letter_testerr}
\end{figure*}

\begin{figure*}[htbp]
\vspace{.3in}
\centerline{\includegraphics[width=0.8\linewidth]{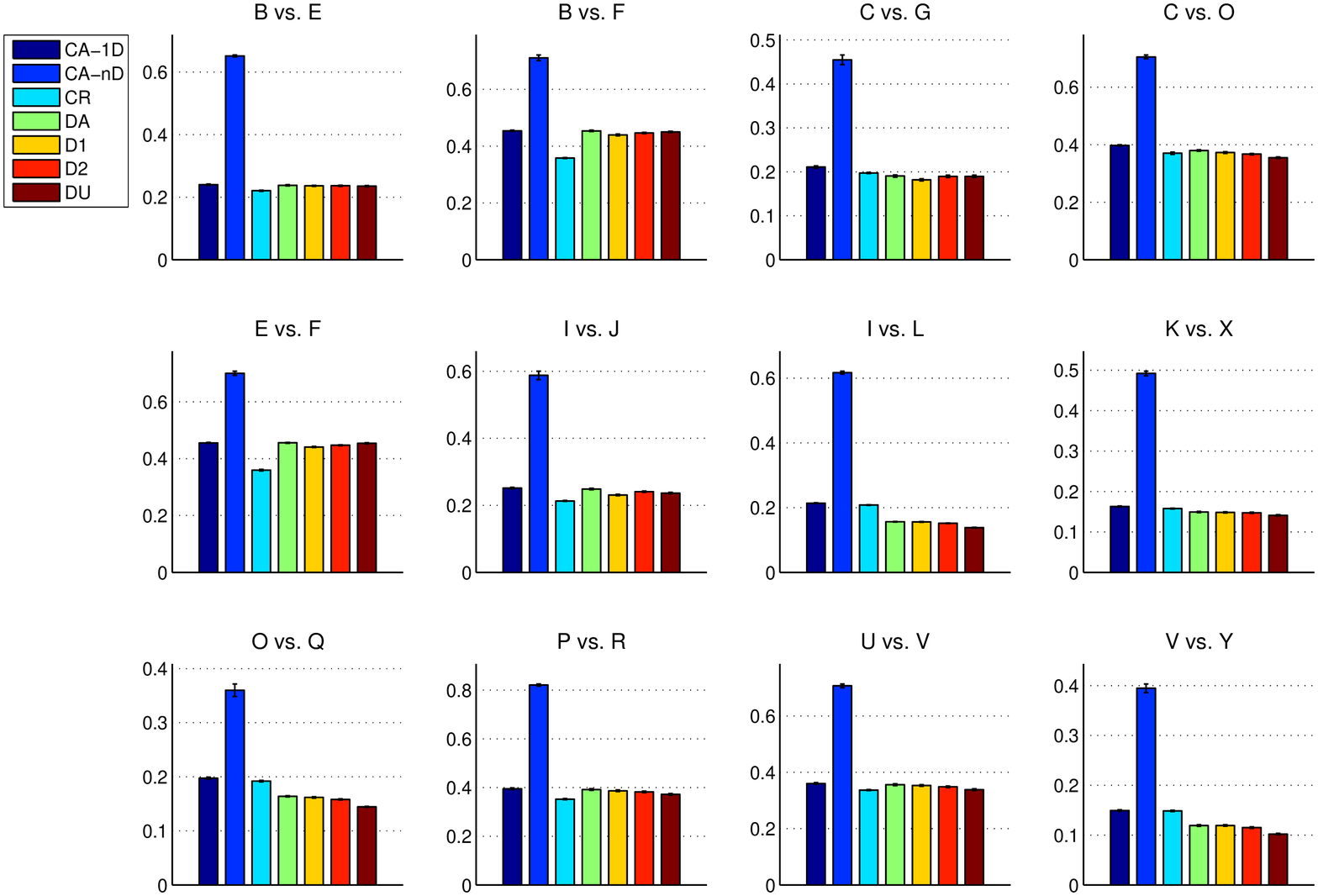}}
\vspace{.3in}
\caption{Alignment values in different tasks of the UCI Letter recognition dataset.}
\label{fig:new_letter_testalignment}
\end{figure*}

\begin{table*}[ht]
\caption{Datasets used in the experiments}
\label{tab:table_label}
\begin{center}
\begin{tabular}{|l|l|l|l|l|l|}
  \hline
	\textbf{Dataset} & \textbf{\# features} & \textbf{\# instances} & \textbf{Training size} & \textbf{Validation size} & \textbf{Test size} \\ \hline
Banana & $2$ & $5300$ & $500$ & $1000$ & $2000$ \\ \hline
Breast Cancer & $9$ & $263$ & $52$ & $78$ & $133$ \\ \hline
Diabetes & $8$ & $768$ & $153$ & $230$ & $385$ \\ \hline
German & $20$ & $1000$ & $200$ & $300$ & $500$ \\ \hline
Heart & $13$ & $270$ & $54$ & $81$ & $135$ \\ \hline
Image Segmentation & $18$ & $2086$ & $400$ & $600$ & $1000$ \\ \hline
Ringnorm & $20$ & $7400$ & $500$ & $1000$ & $2000$ \\ \hline
Sonar & $60$ & $208$ & $41$ & $62$ & $105$ \\ \hline
Splice & $60$ & $2991$ & $500$ & $1000$ & $1491$ \\ \hline
Thyroid & $5$ & $215$ & $43$ & $64$ & $108$ \\ \hline
Waveform & $21$ & $5000$ & $500$ & $1000$ & $2000$ \\ \hline
\end{tabular}
\end{center}
\end{table*}

\begin{figure*}[htbp]
\vspace{.3in}
\centerline{\includegraphics[width=0.8\linewidth]{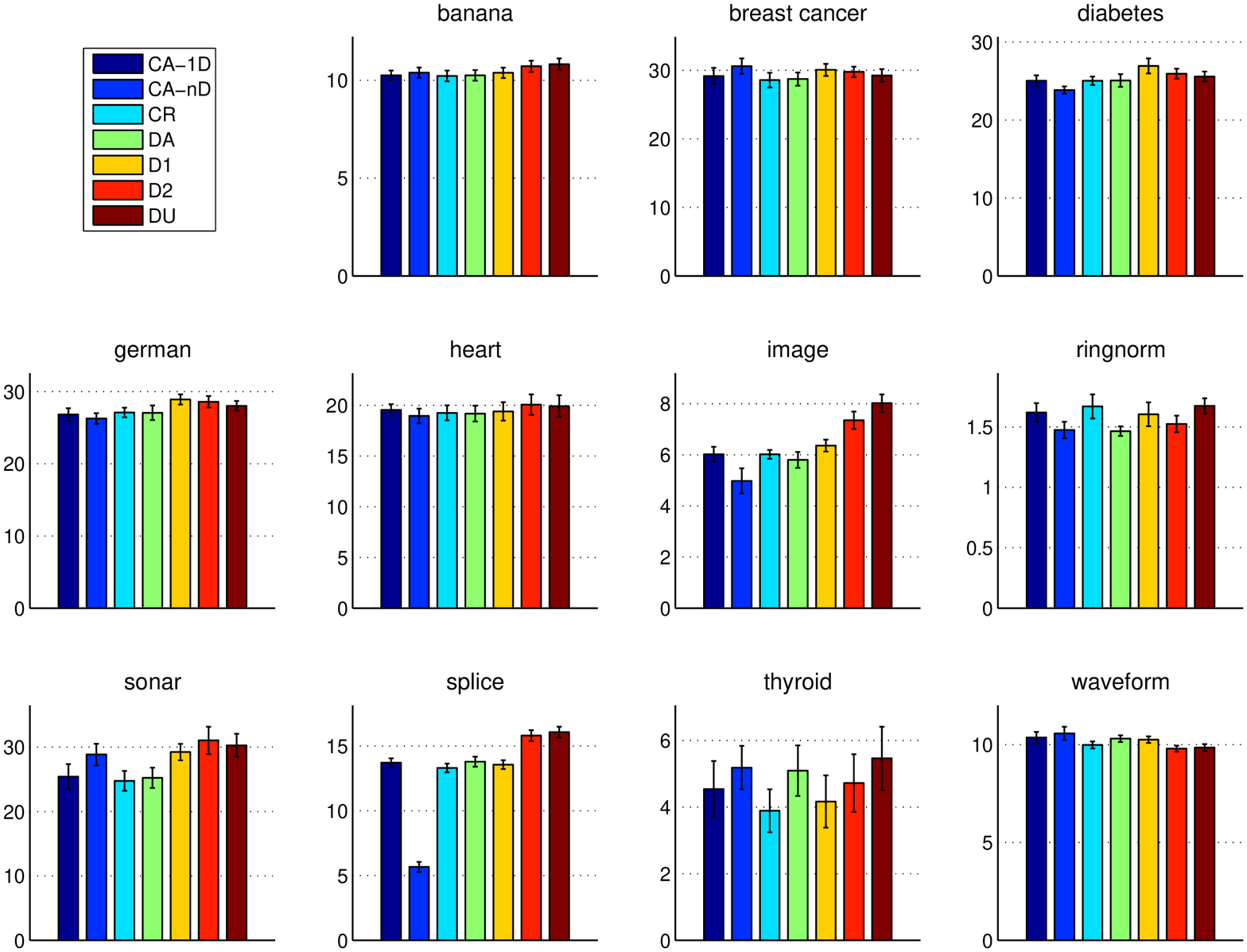}}
\vspace{.3in}
\caption{Misclassification percentages obtained in 11 datasets.}
\label{fig:11_realdata_testerr}
\end{figure*}

\end{document}